\theoremstyle{plain}
\newtheorem{theorem}{Theorem}[section]
\newtheorem{lemma}[theorem]{Lemma}
\newtheorem{corollary}[theorem]{Corollary}
\newtheorem{conj}[theorem]{Conjecture}
\theoremstyle{definition}
\newtheorem{example}{Example}
\newtheorem{definition}[theorem]{Definition}
\newtheorem{assumption}[theorem]{Assumption}
\theoremstyle{remark}
\icmltitlerunning{Theoretical guarantees on the best-of-n alignment policy}
\begin{document}

\twocolumn[
\icmltitle{Theoretical Guarantees on the Best-of-n Alignment Policy}

\begin{icmlauthorlist}
\icmlauthor{Ahmad Beirami\hspace{-0.01in}}{gdm}
\icmlauthor{Alekh Agarwal\hspace{-0.01in}}{gr}
\icmlauthor{Jonathan Berant\hspace{-0.01in}}{gdm}
\icmlauthor{Alexander D'Amour\hspace{-0.01in}}{gdm}
\icmlauthor{Jacob Eisenstein\hspace{-0.01in}}{gdm}
\icmlauthor{Chirag Nagpal\hspace{-0.01in}}{gr}
\icmlauthor{Ananda Theertha Suresh}{gr}
\end{icmlauthorlist}

\icmlaffiliation{gdm}{Google DeepMind}
\icmlaffiliation{gr}{Google Research}

\icmlcorrespondingauthor{\\Ahmad Beirami}{ahmad.beirami@gmail.com}
\icmlcorrespondingauthor{\\Ananda Theertha Suresh}{theertha@google.com}

\icmlkeywords{Machine Learning, ICML}

\vskip 0.3in
]

\printAffiliationsAndNotice{}  %

\newcommand{\bx}{\bm{x}}
\newcommand{\by}{\bm{y}}
\newcommand{\bz}{\bm{z}}
\newcommand{\bs}{\bm{s}}
\newcommand{\mc}{\mathcal}
\newcommand{\gap}{G^{(n)}_{\text{KL}}(\bx)}
\newcommand{\gapw}{G^{(n)}_{\mc{W}}(\bx)}
\newcommand{\thresh}{\text{thresh}}
\newcommand{\pg}{w_{\Phi}(\bx)}

\definecolor{ruddy}{rgb}{1.0, 0.0, 0.16}
\definecolor{gblue}{RGB}{29, 144, 255}
\definecolor{royalblue}{rgb}{0.25, 0.41, 0.88}

\newcommand{\eqclaim}{\overset{\mathrm{claim}}{=\joinrel=}}

\hypersetup{
  colorlinks,
  citecolor=royalblue,
  linkcolor=ruddy,
  urlcolor=royalblue}

\newcommand{\KL}{D_{\text{KL}}}
\newcommand{\alekh}[1]{\textcolor{blue}{\textbf{AA:} #1}}
\newcommand{\KLn}{\widetilde{\text{KL}}_n}
\newcommand{\ahmad}[1]{{\noindent \textit{\small\textcolor{red}{ahmad: #1}}}}
\newcommand{\jacob}[1]{{\noindent \textit{\small\textcolor{blue}{jacob: #1}}}}
\newcommand{\jb}[1]{{\noindent \textit{\small\textcolor{brown}{joberant: #1}}}}
\newcommand{\theertha}[1]{{\noindent \textit{\small\textcolor{green}{theertha: #1}}}}
\newcommand{\chirag}[1]{{\noindent \textit{\small\textcolor{teal}{chirag: #1}}}}
\newcommand{\bofn}{best-of-$n$\,}

\newcommand{\piref}{\pi_{\text{ref}}}

\begin{abstract}
    A simple and effective method for the inference-time alignment and scaling test-time compute of generative models is  best-of-$n$ sampling, where $n$ samples are drawn from a reference policy, ranked based on a reward function, and the highest ranking one is selected. A commonly used analytical expression in the literature  claims that the KL divergence between the best-of-$n$ policy and  the reference policy is equal to $\log (n) - (n-1)/n.$ We disprove the validity of this claim, and show that it is an upper bound on the actual KL divergence. We also explore the tightness of this upper bound in different regimes, and propose a new estimator for the KL divergence and empirically show that it provides a tight approximation. We also show that the win rate of the best-of-$n$ policy against the reference policy is upper bounded by $n/(n+1)$ and derive bounds on the tightness of this characterization. We conclude with analyzing the tradeoffs between win rate and KL divergence of the best-of-$n$ alignment policy, which demonstrate that very good tradeoffs are achievable with $n < 1000$.
\end{abstract}

\section{Introduction}

Generative language models have shown to be effective general purpose tools to solve various problems. While many problems can be solved in a zero-shot manner, the output from the so-called {\em reference} model may not be  outright desirable, e.g., it may violate safety rules or may not solve a math problem correctly. {\em Alignment}~\citep{christiano2017deep,stiennon2020learning,ouyang2022training, bai2022training} and {\em test-time compute scaling}~\citep{brown2024large,snell2024scaling} aim at remedying this issue by further nudging the outcome to improve a reward function while not drifting too far from the reference model. 

Recently, there has been a proliferation of methods for alignment, which include KL-regularized reinforcement learning~\citep{christiano2017deep, ouyang2022training}, controlled decoding~\citep{yang-klein-2021-fudge,mudgal2023controlled}, SLiC~\citep{zhao2022calibrating}, direct preference optimization~\citep{rafailov2023direct}, and best-of-$n$ finetuning~\citep{touvron2023llama}.  At their core, these methods try to solve the following regularized optimization problem:\footnote{While theoretically we analyze this optimization problem as a function of the prompt $\bx,$ in practice we can only solve it by taking another expectation over a set of prompts $\bx \sim \mu.$}
\begin{equation}
 \max_{ \pi (\cdot |\bx)}  E_{\bm{y} \sim \pi(\cdot | \bx)} r(\bm{x},\bm{y}) -\beta \KL (\pi (\cdot | \bx) \| \piref(\cdot | \bx)),
 \label{eq:KL-refularized-RL}
\end{equation}
where $\piref$ denotes a reference language model; $r(\bm{x},\bm{y}) \in \mathbb{R}$ represent a scalar reward associated with response $\bm{y}$ for prompt $\bm{x}$; and the KL divergence $ \KL (q(\cdot | \bx)\| p(\cdot | \bx))$ is defined as
\begin{equation*}
    \KL (q(\cdot | \bx)\| p(\cdot | \bx)) :=  E_{\bm{y} \sim q(\cdot | \bx)} \log \frac{q(\bm{y}| \bm{x})}{p(\bm{y}| \bm{x})}.
\end{equation*}
Note that \cref{eq:KL-refularized-RL} has a closed-form solution~\citep{korbak2022rl, korbak2022reinforcement}:
\begin{equation}
    \pi^*_\beta(\by|\bx) \propto \piref(\by|\bx) e^{\frac{1}{\beta}r(\bx, \by)},
    \label{eq:optimal-solution}
\end{equation}
which defines an exponential family of distributions with nice properties~\citep{yang2024asymptotics}.
We also define the KL divergence averaged over prompts as 
$    \KL^\mu(q\| p) := E_{\bm{x} \sim \mu}  \KL(q(\cdot | \bx)\| p(\cdot | \bx)), %
$
where $\mu$ is a distribution over prompts.
Notice that a small KL divergence between the aligned policy and the reference policy is desired because it implies that the capabilities of the reference policy are largely preserved ~\citep{gao2023scaling, coste2023reward, eisenstein2023helping}, which is also theoretically analyzed by~\citet[Appendix B]{balashankar2024infalign}.

To compare different alignment techniques, it is customary to produce tradeoff curves that measure expected reward (or win rate) as a function of $ \KL (\pi\| \piref)$ for some aligned policy $\pi.$ Guarantees on the KL divergence capture the preservation of the core capabilities of the model and tighter estimates on the KL divergence help give guarantees that the model doesn't lose core capabilities that were present in the reference checkpoint.
Thus, it is desirable to improve the reward with the least drift measured in KL divergence. 

Despite all the advancements in alignment, a simple, popular, and well-performing method for alignment remains to be the {\em best-of-$n$} policy~\citep{nakano2021webgpt,stiennon2020learning}. In fact, \citet{gao2023scaling, mudgal2023controlled, eisenstein2023helping} show that best-of-$n$ consistently achieves compelling win rate vs KL tradeoff curves, that even dominate those of KL-regularized reinforcement learning and other more involved alignment policies. Llama 2~\citep{touvron2023llama} uses  best-of-$n$ as a teacher outcomes to further finetune the base model. %
 \citet{mudgal2023controlled} extended best-of-$n$ through $q$-learning to block-wise best-of-$n$ decoding.
This has also led to recent research on distilling best-of-$n$ into new models~\citep{gui2024bonbon, amini2024variational, sessa2024bond, qiu2024treebon}. 
\citet{hughes2024best, beetham2024liar} use best-of-$n$ as an effective method for jailbreaking. Best-of-$n$ is also used as a strong baseline in scaling inference-time compute~\citep{brown2024large,snell2024scaling}. This overwhelming empirical success motivates our theoretical investigation of the best-of-$n$ alignment policy.

Subsequent to this work, \citet{yang2024asymptotics} provided theoretical reasoning for the performance of best-of-$n$ by showing it achieves asymptotically optimal reward-KL tradeoffs. \citet{gui2024bonbon} characterized the win rate vs KL gap to be small in the asymptotic regime of a language model whose outcomes have infinitesimally small likelihood. \citet{sunfast2024} made best-of-$n$ faster through speculative rejection. \citet{mroueh2024information} provided information-theoretic bounds on reward vs KL tradeoffs for best-of-$n$.

\paragraph{Best-of-$n$.} Let $\bm{x}$ be a given input prompt to the language model. Let $\bm{y}_1, \ldots, \bm{y}_n$ be $n$ i.i.d. samples drawn from $\piref(\cdot | \bm{x})$. The best-of-$n$ strategy selects\footnote{We define $[n] := \{1, \ldots, n\}$.}
\vspace{-.05in}
\begin{equation}
    \bm{y} = \bm{y}_{k^*}  \quad\quad \text{where}\quad \quad k^* := \arg\max_{k \in [n]} r(\bm{x}, \bm{y}_k).
\vspace{-.1in}
\end{equation}
This process inherently leads to sampling from a new policy that is aligned to the reward, denoted by $\pi^{(n)}.$ Notice that $\pi^{(1)} = \piref,$ and increasing $n$ increases the reward at the cost of drifting away from the base model. 

Our goal in this paper is to better understand the best-of-$n$ alignmnet policy.
In particular, we are  interested in theoretical guarantees on $\KL^\mu(\pi^{(n)} \| \piref)$ for different values of $n.$ 
A commonly used  expression in the literature~\citep{stiennon2020learning,hilton2022measuring, coste2023reward,gao2023scaling,go2023compositional,scheurer2023training} claims 
\vspace{-.05in}
\begin{equation}
    \KL^\mu (\pi^{(n)} \| \piref) \eqclaim ~
 \KLn := \log(n) - (n-1)/n.
    \label{eq:KL-formula}
\end{equation}
This formula is commonly used to demonstrate reward-KL tradeoffs for the best-of-$n$ policy. Let us further inspect this formula using a toy example.
\begin{example}
Consider an unprompted model with $\bm{x} = \emptyset$ (no input) and binary output, $\bm{y} \in \{0, 1\}.$ Let the two outcomes be equiprobable, i.e., $\piref(0) =\piref(1) = \frac{1}{2}$.
Further, let $r(0) = 0$, and $r(1) = 1,$ i.e., outcome $1$ is more desirable than outcome $0.$ In this example, we can compute $\pi^{(n)}$ in closed form. Specifically, we can see that $\pi^{(n)}(0) = \frac{1}{2^n}$ and $\pi^{(n)}(1) = 1 - \frac{1}{2^n}.$ Thus,%
\vspace{-.15in}
\begin{equation}
    \KL(\pi^{(n)} \| \piref) =  \log(2) - h\left(\frac{1}{2^n}\right),
    \label{eq:KL-example-1}
\vspace{-.05in}
\end{equation}
where $h(\cdot)$ is the binary entropy function.\footnote{$h(x) := -x\log(x) - (1-x)\log(1-x),$ for all $x \in (0,1)$, and $h(0) = h(1) := 0$. Further, note that all  logarithms in this paper are to the base $e$.}
We compare the exact closed-form expression for KL divergence with the analytical formula in~\cref{eq:KL-formula}. As can be seen in Figure~\ref{fig:example1} (and is evident from~\cref{eq:KL-example-1}), the true KL is upper bounded by $\log(2)$ for all $n,$ whereas $\KLn$ grows unbounded as $n \to \infty.$ We also report a new estimator for KL divergence that closely mirrors the true KL divergence.
\label{toy-example}
\end{example}

\begin{figure}
    \centering
    \vspace{-.2in}
    \includegraphics[width=0.9\linewidth]{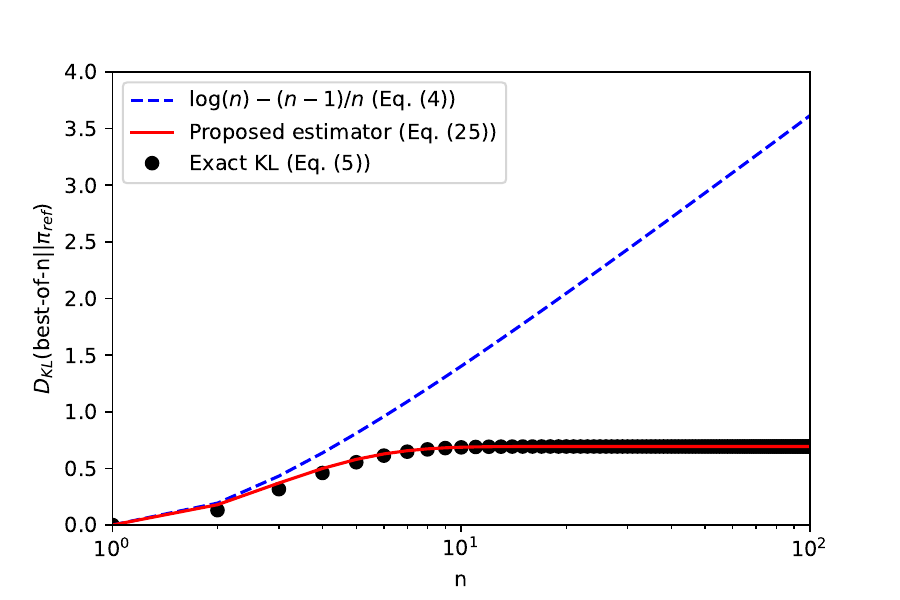}
    \vspace{-.2in}
    \caption{The analytical  formula ($\log(n) - (n-1)/n$)  (\cref{eq:KL-formula}), the exact KL divergence (\cref{eq:KL-example-1}), and the proposed estimator  (\cref{eq:proposed-KL-estimator}), for Example~\ref{toy-example}, illustrating a  case where the gap between the analytical formula and the exact KL divergence is unbounded.}
    \label{fig:example1}
    \vspace{-.15in}
\end{figure}

As we learnt from Example~\ref{toy-example}, the KL divergence between the best-of-$n$ policy and the reference policy may be quite different from what the analytical formula used in the literature suggests. In the rest of this paper, we shed some light on this formula, derive bounds on the KL divergence, and propose a new estimator for the KL divergence that better captures the behavior of the KL divergence. We also theoretically reason about the win rate vs KL tradeoffs for the best-of-$n$ policy, and justify its widespread use in language model alignment.

\vspace{-.1in}
\section{Derivation of the Best-of-n Policy}
Our first step is to provide a derivation for the best-of-$n$ policy under two simplifying assumptions. Let $r(\bm{x},\bm{y}) \in \mathbb{R}$ represent the scalar reward of response $\bm{y}$ in context $\bm{x}$.
\begin{assumption}
We assume that the reward $r(\bm{x},\bm{y})$ is unique for all $\bm{x}, \bm{y}$. 
\label{assump:reward-ordering}
\end{assumption}

\begin{assumption}
Let $   \mathcal{Y}^* := \{ \bm{y}~ | \max_{\bm{x} \in \mathcal{X}} \piref(\bm{y}|\bm{x}) >0 \}.$
We assume that the language model is such that $|\mathcal{Y}^*| <\infty,$ i.e., there are finite possible outcomes (in each context). %
\label{assump:finite-range}
\end{assumption}

Note that Assumptions~\ref{assump:reward-ordering}-\ref{assump:finite-range} are fairly non-restrictive and make the presentation of the results clearer.

The following result gives the probability mass function (PMF) of the best-of-$n$ policy.
\begin{lemma}
Under Assumptions~\ref{assump:reward-ordering}-\ref{assump:finite-range}, 
for all $n \in \mathbb{N},$ 
the PMF of the best-of-$n$ policy is given by
\begin{equation}
    \pi^{(n)} (\bm{y}| \bm{x}) = \mathcal{F}_{\piref}(\bm{y}|\bm{x})^n - \mathcal{F}_{\piref}^{-}(\bm{y}|\bm{x})^n,
\label{eq:best-k-pmf}
\end{equation}
where for any distribution $\pi,$
\begin{align}
  \mathcal{F}_{\pi}(\bm{y}|\bm{x}) &:= P_{\bm{z} \sim \pi(\cdot | \bx)}[r(\bm{x}, \bm{z}) \leq r(\bm{x}, \bm{y})] ,\label{eq:F}\\
\mathcal{F}_{\pi}^{-}(\bm{y}|\bm{x}) &:= P_{\bm{z} \sim \pi(\cdot | \bx)}[r(\bm{x}, \bm{z}) < r(\bm{x}, \bm{y})].\label{eq:F-minus}  
\end{align}
\label{lemma:best-of-k-pmf}
\vspace{-.4in}
\end{lemma}
\begin{proof}%
Let $\mathcal{Y}_{\bm{x}}$ be the set of all possible outcomes of the language model, given prompt $\bm{x},$  i.e., 
$
   \mathcal{Y}_{\bm{x}} := \{ \bm{y}~ | ~\piref(\bm{y}|\bm{x}) >0 \}.
$
Further, let $L_{\bm{x}} := |\mathcal{Y}_{\bm{x}}| <\infty$ (Assumption~\ref{assump:finite-range}).
We order all possible $L_{\bm{x}}$ outcomes as $\{\widetilde{\bm{y}}_i\}_{i \in [L_{\bm{x}}]}$ such that if $r(\bm{x}, \widetilde{\bm{y}}_j) > r(\bm{x}, \widetilde{\bm{y}}_i)$, then $j>i.$ 
In other words, $\widetilde{\bm{y}}_1$ is the least desirable outcome associated with the lowest reward, and $\widetilde{\bm{y}}_{L_{\bm{x}}}$ is the most desirable outcome associated with the highest reward. 

First notice that sampling from $\piref$ is equivalent to sampling $u \sim \mathcal{U}[0,1],$ and returning $\widetilde{\bm{y}}_i$, such that 
\begin{equation}
  \mathcal{F}_{\piref}(\widetilde{\bm{y}}_{i-1}|\bm{x})  \leq u < \mathcal{F}_{\piref}(\widetilde{\bm{y}}_i|\bm{x}).
\end{equation}
Similarly, sampling from the best-of-$n$ strategy is akin to sampling $u_1,\ldots,u_n \overset{\text{i.i.d.}}{\sim} \mathcal{U}[0,1],$ and returning $\widetilde{\bm{y}}_i,$ such that
\begin{equation}
  \mathcal{F}_{\piref}(\widetilde{\bm{y}}_{i-1}|\bm{x})  \leq \max_{k \in [n]} u_k < \mathcal{F}_{\piref}(\widetilde{\bm{y}}_i|\bm{x}).
\end{equation}
On the other hand, we know that the CDF of the maximum of $u_1,\ldots,u_n \overset{\text{i.i.d.}}{\sim} \mathcal{U}[0,1],$ for all $\tau \in [0,1]$ is given by
\begin{equation}
    P\left[\max_{k \in [n]} u_k  \leq \tau\right] = \tau^{n}.
\end{equation}
Hence, for all $n \in \mathbb{N},$ 
the PMF of the best-of-$n$ policy, denoted as $\pi^{(n)}$ is given by
\begin{equation}
    \pi^{(n)} (\widetilde{\bm{y}}_i | \bm{x}) = \mathcal{F}_{\piref}(\widetilde{\bm{y}}_i|\bm{x})^n - \mathcal{F}_{\piref}(\widetilde{\bm{y}}_{i-1}|\bm{x})^n \quad \quad \forall i \in [L_{\bm{x}}],
\label{eq:best-k-pmf-rewrite}
\end{equation}
where $\mathcal{F}_{\piref}(\widetilde{\bm{y}}_{0}|\bm{x}):= 0,$ and 
\vspace{-.05in}
\begin{equation}
   \mathcal{F}_{\piref}(\widetilde{\bm{y}}_i|\bm{x}) = \sum_{l \in [i]} \piref(\widetilde{\bm{y}}_l |\bm{x}).
\vspace{-.05in}
\end{equation}
The proof is completed by noticing that  $\mathcal{F}_{\piref}(\widetilde{\bm{y}}_{i-1}|\bm{x}) = \mathcal{F}^{-}_{\piref}(\widetilde{\bm{y}}_{i}|\bm{x}) $.
\end{proof}
Notice that if $n = 1,$  then $\pi^{(1)} (\by | \bx) = \piref (\by | \bx)$. For any $n$, Lemma~\ref{lemma:best-of-k-pmf} gives a closed-form expression for $\pi^{(n)} (\by | \bx),$ which we will use subsequently to derive theoretical guarantees on the KL divergence and win rate of best-of-$n$.%

We also remark that we can extend this PMF to $\pi^{(\tau)}$ for real $\tau \geq 1$. While it may not be immediately clear how to sample from this extension, it is used for best-of-$n$ distillation~\citep{gui2024bonbon, amini2024variational, sessa2024bond} and we also use it to give bounds in \cref{sec:tradeoffs}.

\vspace{-.1in}
\section{Relations Between the KL Divergence and the Analytical Formula}
\label{sec:KL-bounds}
Our first result shows that the analytical formula is an upper bound on the (context-dependent) KL divergence. The proofs for this result and several subsequent results are relegated to Appendix~\ref{app:proofs}.
\begin{theorem}
    For any $n \in \mathbb{N}$, and any $\bm{x},$ let $\KLn$ be defined in \eqref{eq:KL-formula}. Then,
    \begin{equation*}
        \KL(\pi^{(n)}(\cdot|\bx)  \| \piref(\cdot|\bx)) \leq \KLn = \log(n) - \frac{n-1}{n}.
    \end{equation*}
\label{thm:KL-upper-bound}
\vspace{-.1in}
\end{theorem}

\begin{corollary}
For any $n,$ and any prompt distribution $\mu,$
\begin{equation*}
\KL^\mu (\pi^{(n)} \| \piref) = E_{\bm{x} \sim \mu}  \KL(\pi^{(n)}(\cdot|\bx)  \| \piref(\cdot|\bx)) \leq  \KLn. 
\end{equation*}
\end{corollary}
\begin{proof}
    This directly follows from Theorem~\ref{thm:KL-upper-bound}.
\end{proof}

Subsequent to this work, \citet{mroueh2024information} has extended this result to a larger class of stochastic processes (with potentially continuous support such as diffusion models) through the application of the strong data processing inequality. In Appendix~\ref{app:blockwise-bon}, we also extend this result to derive bounds on the KL divergence of the blockwise best-of-$n$ decoding~\citep{mudgal2023controlled}, which generally allows to reach similar reward vs KL tradeoffs with 10x smaller $n.$
In the rest of this section, we characterize the gap defined as follows:
\begin{equation}
    \gap := \KLn - \KL(\pi^{(n)}(\cdot|\bx) \| \piref(\cdot|\bx)) \geq 0.
    \label{eq:gap}
\end{equation}

\subsection{Upper Bounds on the Gap}

We need a definition to state the upper bound results.
\begin{definition}
A model $\piref$ is called $\delta$-bound if $\piref( \bm{y}| \bm{x}) \leq \delta$ for all $\by \in \mathcal{Y}^*$ and $\bx.$
\end{definition}
In particular, we are interested in characterizing the gap for a $\delta$-bound model for a small $\delta$, which is a model with all outcomes having small likelihoods. Next, we state our main upper bound.
\begin{theorem}
The gap in~\cref{eq:gap} is upper bounded by
\begin{equation}
   \gap \leq 2n (n-1) e^{-H_2( \piref|\bm{x}) },
\end{equation}
where $H_2(\piref|\bm{x})$ is the conditional R\'enyi entropy of order $2$ of the language model given context $\bm{x},$ and $H_\alpha(\pi)$ for any distribution $\pi$ is  defined as 
\begin{equation}
    H_\alpha(\pi| \bm{x}) := \frac{1}{1-\alpha} \log \left(\sum_{y \in \mathcal{Y}^*} \left(\pi(\bm{y}|\bm{x}) \right)^\alpha\right).
    \label{eq:renyi}
\end{equation}
\label{thm:gap-upper-bound}
\vspace{-.1in}
\end{theorem}

\begin{corollary}
Let $\piref( \bm{y}| \bm{x}) \leq \delta$ for all $\by \in \mathcal{Y}^*,$ i.e., $\piref$ is $\delta$-bound. Then, the gap in~\cref{eq:gap} is upper bounded by
\begin{equation}
   \gap \leq 2n (n-1) \delta.
\end{equation}
\label{cor:gap-upper-bound-uniform}
\vspace{-.3in}
\end{corollary}
\begin{proof}
    The proof follows by noticing that $H_2( \piref|\bm{x}) \geq \log (1/\delta)$ and invoking Theorem~\ref{thm:gap-upper-bound}.
\end{proof}

Intuitively, if the model outcomes are fairly low probability, making it unlikely to get the same sample twice or more in the $n$ outcomes for best-of-$n$, the analytical formula $\KLn$ in \eqref{eq:KL-formula} could be relatively accurate, and the gap is bounded above. 
In other words, if $\piref$ is a $\delta$-bound model, and $n$ is sufficiently small such that $n^2 \delta \ll 1,$ then
\begin{equation}
    \KL(\pi^{(n)}(\cdot|\bx)  \| \piref(\cdot|\bx)) \approx \log(n) - \frac{n-1}{n}.
\end{equation}
This assumption is left implicit in the derivation of~\citet{hilton2022measuring} for the KL divergence of best-of-$n.$

\subsection{Lower Bounds on the Gap}
In this section, we characterize cases where the gap may be large. To this end, let us define
\begin{equation}
    \varepsilon_n := \piref( \bm{y}| \bm{x}), \quad \text{where} \quad  \bm{y} \sim \pi^{(n)} (\cdot | \bm{x}).
    \label{eq:eps-n}
\end{equation}
Note that $\varepsilon_n$ is a random function of $\bm{x}.$  In the limit as $n \to \infty,$ we define
\begin{equation}
    \varepsilon_\infty := \piref( \bm{y}_{\max}(\bm{x})| \bm{x}),
    \label{eq:eps-infty}
\end{equation} 
where $\bm{y}_{\max}(\bm{x}) = \arg\max_{\bm{y} \in \mathcal{Y}^*} r(\bm{x}, \bm{y}).$
Notice that $\varepsilon_\infty$ is a deterministic function of $\bm{x}$.
\begin{theorem}
Let $\varepsilon_\infty > 0$ be defined in \cref{eq:eps-infty}.
 For $n \in \mathbb{N}$, the gap between the analytical formula in \cref{eq:KL-formula} and KL divergence is lower bounded by  
\begin{align}
  \hspace{-.02in}  \gap
     \hspace{-.02in}\geq\hspace{-.02in} &~(1 - (1-\varepsilon_\infty)^n) \Big(\log \frac{n\varepsilon_\infty}{1- (1-\varepsilon_\infty)^n} \hspace{-.01in}-\hspace{-.01in} \frac{n-1}{n} \Big) \nonumber\\& -(n-1)(1-\varepsilon_\infty)^n \log (1-\varepsilon_\infty) > 0.
\end{align}
   \label{thm:lower-gap-nontrivial}
\vspace{-.1in}
\end{theorem}
\begin{corollary}
As $n \to \infty$, the gap is lower bounded by
\begin{equation}
     \gap \geq \log (n \varepsilon_\infty )+ o_n(\log n).
\end{equation}
\label{thm:lower-gap}
\vspace{-.2in}
\end{corollary}

In particular, when $n \varepsilon_\infty \gg 1,$ then the gap grows unbounded as we already observed in Example~\ref{toy-example}.

\section{Proposed Estimator for KL Divergence}
\label{sec:proposed-estimator}
Motivated by the derivation of the best-of-$n$ policy in Lemma~\ref{lemma:best-of-k-pmf}, we propose a new estimator for the KL divergence.
As a warm-up, first notice the following upper bound:
\begin{lemma}
    For any $n \in \mathbb{N}$ and any $\bm{x},$
    \begin{align*}
        \KL&(\pi^{(n)}(\cdot | \bx)  \| \piref(\cdot | \bx)) \\ &\leq E_{\bm{y} \sim \pi^{(n)}} \left[ \log \left(\frac{1 - (1 - \piref (\bm{y}|\bm{x}) )^n}{\piref (\bm{y}|\bm{x})}\right)\right].
    \end{align*}
\label{thm:probabilistic-KL-upper-bound-new}
\end{lemma}
Therefore we may suggest to use the following {\em alternate estimator} for KL divergence
\begin{equation}
    \widehat{D_{\text{KL},\text{loose}}}(\varepsilon_n) :=  \log \left(\frac{1 - (1 - \varepsilon_n )^n}{\varepsilon_n}\right),
\label{eq:alternate-bound}
\end{equation} 
where $\varepsilon_n$ is defined in~\cref{eq:eps-n}.
Note that the expected value of $  \widehat{D_{\text{KL},\text{loose}}}(\varepsilon_n)$ is an upper bound on the KL divergence between the best-of-$n$ policy and the reference policy. However, this estimator is loose by an additive constant of $(n-1)/n,$ especially when $n \varepsilon_n \ll 1.$

Here we propose a different estimator to close this gap. To derive the estimator, first notice the following result.
\begin{corollary}
Let
\begin{align}
    d_n(\varepsilon) := & (1-\varepsilon)^n \Big( \log n + (n-1) \log(1-\varepsilon) - \frac{n-1}{n}\Big) \nonumber\\
    &+ (1 - (1-\varepsilon)^n) \log \left(\frac{1 - (1-\varepsilon)^n}{\varepsilon}\right).
    \label{def:KL-approx}
\end{align}
Recall the definition of $\varepsilon_\infty$ in~\cref{eq:eps-n}. Then,
   \begin{align*}
   \KL&(\pi^{(n)} (\cdot | \bx) \| \piref(\cdot | \bx)) \leq  d_n(\varepsilon_\infty).
\end{align*} 
\label{thm:bound-with-highest-p}
\vspace{-.3in}
\end{corollary}

Note that \cref{thm:bound-with-highest-p} could not be directly used to derive an estimator for the KL divergence because we do not observe $\varepsilon_\infty$ when performing the best-of-$n$ policy.
Inspired by this result, and given that we can only observe $\varepsilon_n,$ we put forth the following practical estimator on the KL divergence.
\begin{definition}
Let $\varepsilon_n$ be defined in \eqref{eq:eps-n}. Then, we propose the following estimator for the KL divergence of the best-of-$n$ policy and the reference policy:
\begin{equation}
    \widehat{\KL}(\varepsilon_n) := d_n(\varepsilon_n).
\label{eq:proposed-KL-estimator}
\end{equation}
\label{approx-KL-estimator}
\vspace{-.3in}
\end{definition}

\begin{figure}[t]
    \centering
    \includegraphics[width=0.5\linewidth]{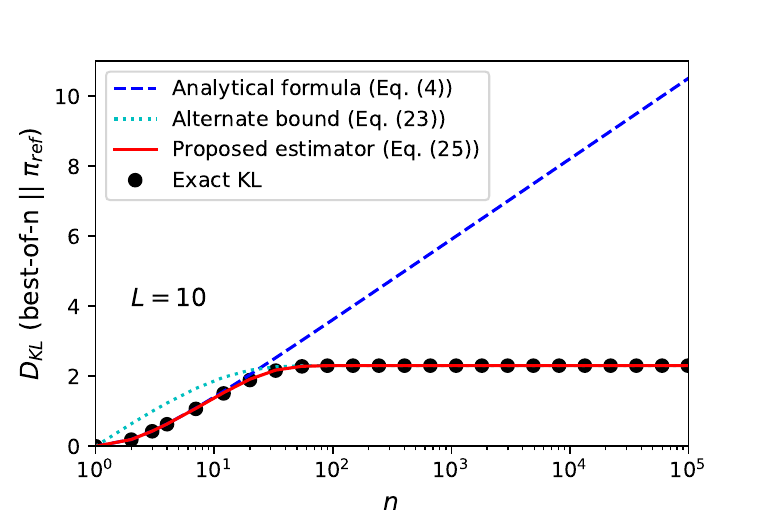}\hspace{-.05\linewidth}
    \includegraphics[width=0.5\linewidth]{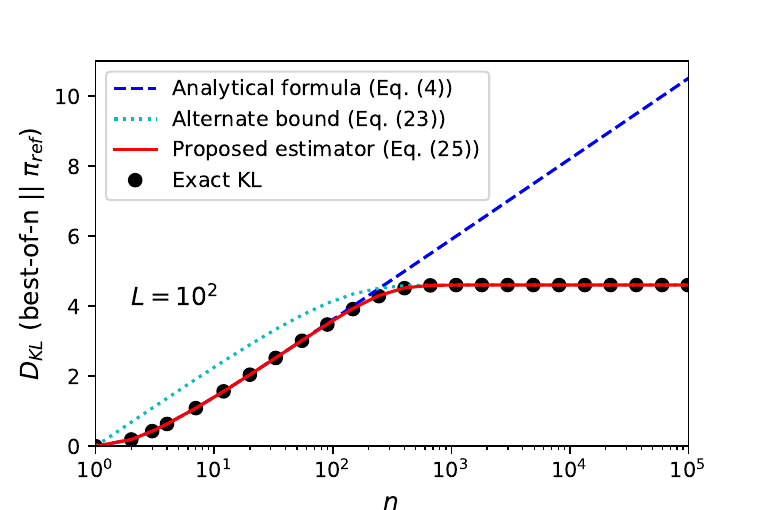}\vspace{-.1in}
    \includegraphics[width=0.5\linewidth]{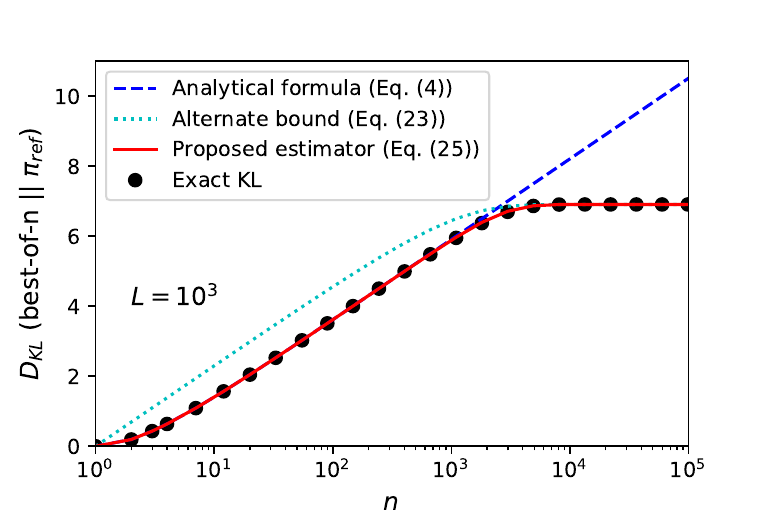}\hspace{-.05\linewidth}
    \includegraphics[width=0.5\linewidth]{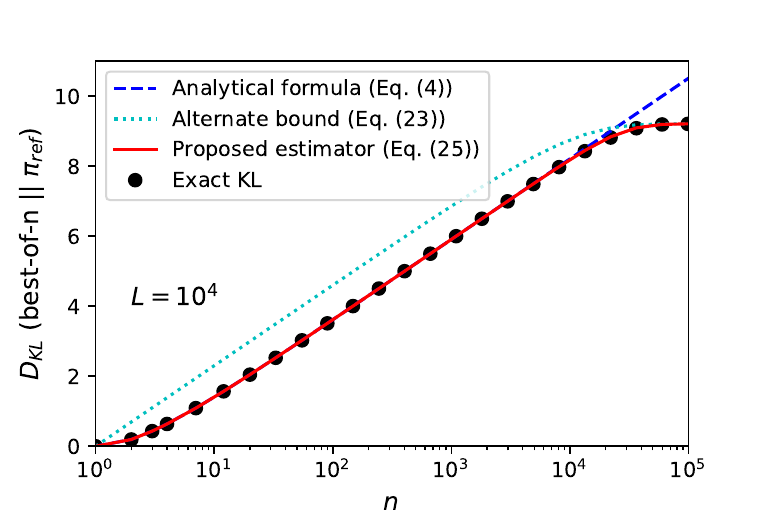}
    \vspace{-.2in}
    \caption{The analytical  formula ($\log(n) - (n-1)/n$), \cref{eq:KL-formula}, the alternate bound, \cref{eq:alternate-bound}, the proposed estimator, \cref{eq:proposed-KL-estimator}, and the exact KL divergence, for uniform distributions supported on alphabets of size $L = 10, 10^2, 10^3, 10^4$ respectively. %
    \vspace{-.1in}
    }
    \label{fig:example-uniform}
\end{figure}

Note that the estimator proposed in Definition~\ref{approx-KL-estimator} is a random variable that depends on $\varepsilon_n$. We conjecture that in expectation it provides an upper bound on the true KL divergence. 
\begin{conj}
Let $\varepsilon_n$ be defined in \eqref{eq:eps-n}. Then,
\begin{equation*}
     \KL(\pi^{(n)}(\cdot | \bx)  \| \piref(\cdot | \bx)) \leq E_{\bm{y} \sim \pi^{(n)} (\cdot | \bm{x})}\left[ \widehat{\KL}(\varepsilon_n)\right].
\end{equation*}
\vspace{-.2in}
\label{conj:upper-bound}
\end{conj}
While we don't offer a mathematical proof for Conjecture~\ref{conj:upper-bound}, tens of thousands of randomly generated numerical experiments suggest that it holds true.

Let us further inspect the proposed estimator and its variance. We first show that it is strictly upper bounded by $\KLn$.
\begin{lemma}
For any realization of $\varepsilon_n,$ we have
\begin{equation}
    0 \leq  \widehat{\KL}(\varepsilon_n) \leq \KLn,
\end{equation}
and hence 
\begin{equation}
    E_{\bm{y} \sim \pi^{(n)} (\cdot | \bm{x})}\left[ \widehat{\KL}(\varepsilon_n)\right] \leq \KLn.
\end{equation}
\label{lem:KL-newbound}
\end{lemma}
Given this, we can immediately bound the variance of the estimator too. \cref{lem:KL-newbound} implies that standard deviation of the estimator is upper bounded by $\log n,$ which in turn implies that if the estimator is averaged over $M = O(\log n \log \frac{1}{\delta})$ draws from the best-of-$n$ model, the standard deviation is guaranteed to be smaller than $\delta.$  Given that we are generally interested in $n < 1000$, the dependence on $n$ is mild. Having said that, given each of the $M$ batches contains $n$ iid samples (total of $M\times n$ iid samples), one should be able to build a bootstrapped estimator for the variance with better guarnatees.

In what follows we numerically inspect the proposed estimator in a few scenarios, and compare it with the analytical formula and the exact KL divergence between the best-of-$n$ policy and the reference policy.

The first set of examples, in Figure~\ref{fig:example-uniform}, are uniform distributions over alphabets of varying sizes. Notice that $\varepsilon_n = \varepsilon_\infty = \frac{1}{L}$ for a uniform distribution, and hence the estimator in \cref{eq:proposed-KL-estimator} and \cref{eq:alternate-bound} are deterministic. As can be seen KL divergence saturates around $n \approx L.$
For $\frac{n}{L} \ll 1,$ the analytical formula of \cref{eq:KL-formula}, $\log(n) - (n-1)/n$,
has a small gap with the actual KL divergence (which was also theoretically established in Corollary~\ref{cor:gap-upper-bound-uniform}). On the other hand, when $\frac{n}{L} \gg 1,$ the gap between $\log(n) - (n-1)/n$ and the actual KL divergence becomes large and unbounded (which was also theoretically establish in Corollary~\ref{thm:lower-gap}). 
The alternate bound in \cref{eq:alternate-bound} captures the behavior of the KL divergence for $\frac{n}{L} \gg 1$ and has a finite gap. However, it has a gap of $(n-1)/n$ for $\frac{n}{L} \ll 1$ as previously discussed.
Finally, we also observe that the proposed estimator in \cref{eq:proposed-KL-estimator} %
follows the behavior of the true KL divergence closely in all examples.

\begin{figure}[t]
    \centering
    \includegraphics[width=0.495\linewidth]{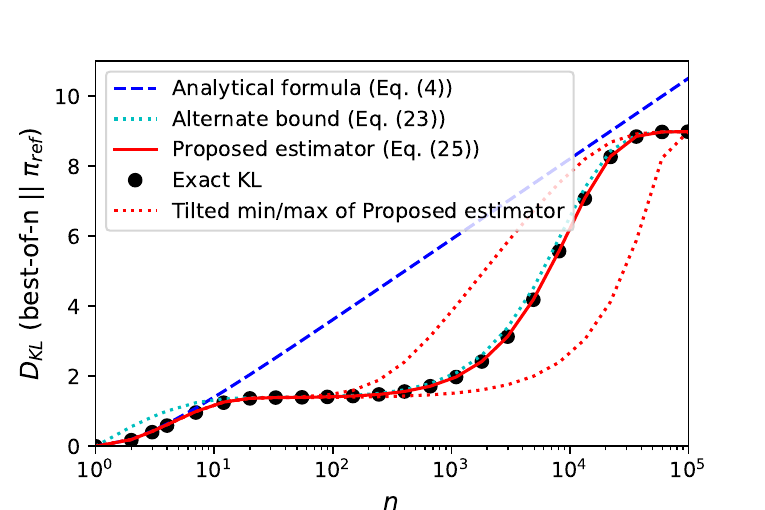}
    \hspace{-.05in}
    \includegraphics[width=0.495\linewidth]{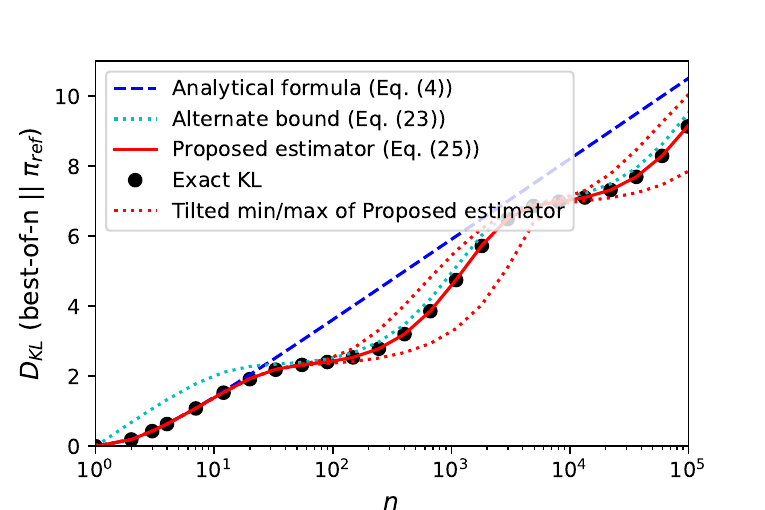}
    \vspace{-.1in}
    \caption{The analytical  formula ($\log(n) - (n-1)/n$), \cref{eq:KL-formula}, the alternate bound, \cref{eq:alternate-bound}, the proposed estimator, \cref{eq:proposed-KL-estimator}, and the exact KL divergence, for two cherry picked examples. In the left panel, the output is supported on an alphabet of size $5$, where the highest reward outcome has a probability %
    of $10^{-4}$ and the rest of the probability mass is uniformly distributed over the rest of the outcomes. 
    In the right panel, the output is supported on an alphabet of size $200,$ where the three highest reward outcomes have probabilities $10^{-5},10^{-3}$ and $10^{-1}$ respectively.
    The rest of the probability mass is uniformly distributed over the rest of the outcomes.}
    \label{fig:example-cherry-pick}
    \vspace{-.15in}
\end{figure}

In the second set of examples, we cherry pick the probability mass function on the outcome to create different behaviors of the KL divergence, shown in Figure~\ref{fig:example-cherry-pick}. In the left panel, the output is supported on an alphabet of size $5$, where the highest reward outcome has a probability 
of $10^{-4}$ and the rest of the probability mass is uniformly distributed over the rest of the outcomes. We observe that KL divergence saturates early until the highest reward outcome is discovered with $n \approx 10^4$. In the right panel, the output is supported on an alphabet of size $200,$ where the highest reward outcome has a probability of $10^{-5},$ the second highest reward outcome has a probability of $10^{-3},$ and the third highest reward outcome  has a probability of $10^{-1}.$ The rest of the probability mass is uniformly distributed over the rest of the outcomes. As can be seen, the KL divergence starts to saturate until the next high reward is outcome is discovered around $n \approx 10^3$ and $n \approx 10^5.$ As can be seen, the analytical formula in \cref{eq:KL-formula} does not capture the behavior of the KL divergence at all whereas the alternate bound in \cref{eq:alternate-bound} is much better aligned with the actual behavior. Finally, we observe that the proposed estimator in \cref{eq:proposed-KL-estimator} closely follows the actual KL divergence. We would like to recall that the proposed estimator is random and we have plotted the expected value of the estimators, whereas in practice both estimators are subject to variance due to the randomness in the value of $\varepsilon_n$ (\cref{eq:eps-n}). To capture the deviation from mean, we compute the {\em tilted min} and {\em tilted max} (see \cref{appendix:tilt} for the definition), which are also plotted in Figure~\ref{fig:example-cherry-pick}. 
As can be seen, in cases where $\varepsilon_n$ could widely vary based on whether a certain outcome appears in the set of $n$ outcomes, the variance could be high.

\begin{figure}[t]
    \centering
    \hspace{-.1in}
    \includegraphics[width=0.52\linewidth]{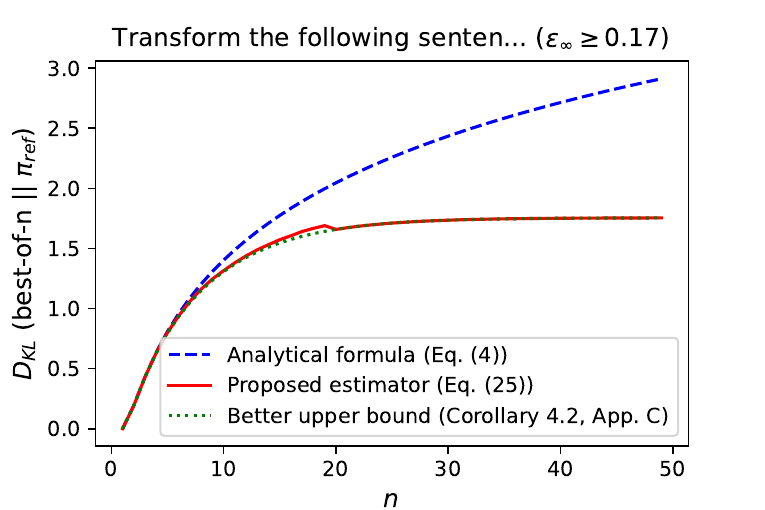}
    \hspace{-.15in}
  \includegraphics[width=0.52\linewidth]{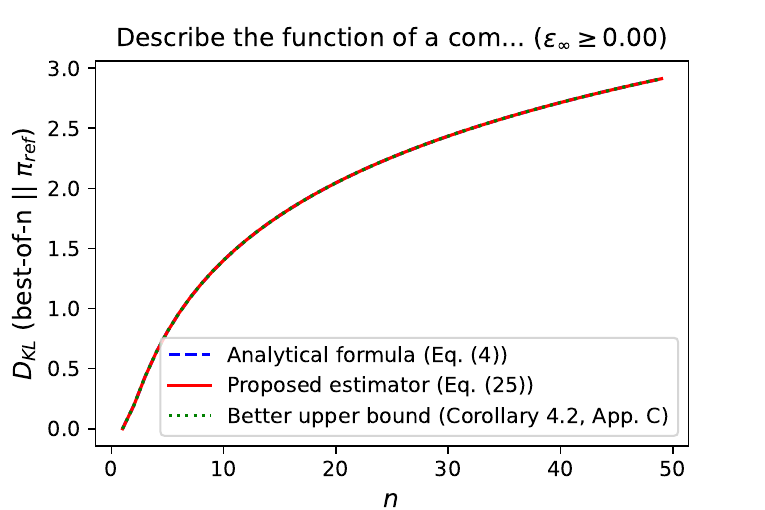}
  \vspace{-.1in}\hspace{-.1in}
   \includegraphics[width=0.52\linewidth]{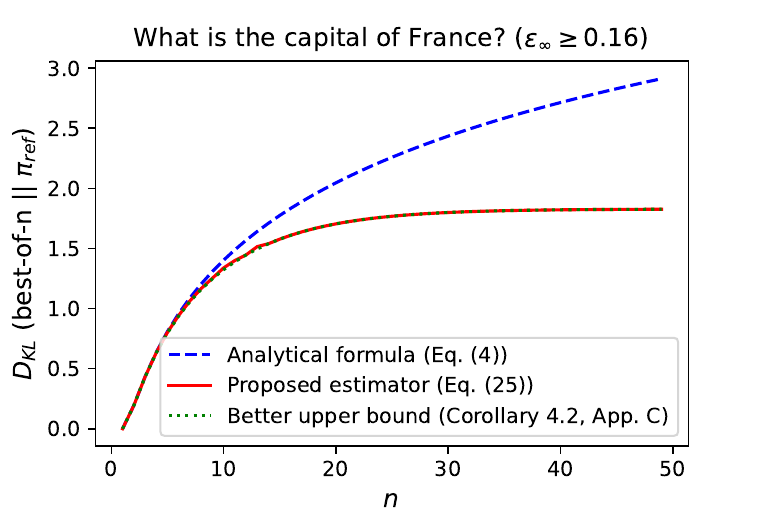}
   \hspace{-.15in}
    \includegraphics[width=0.52\linewidth]{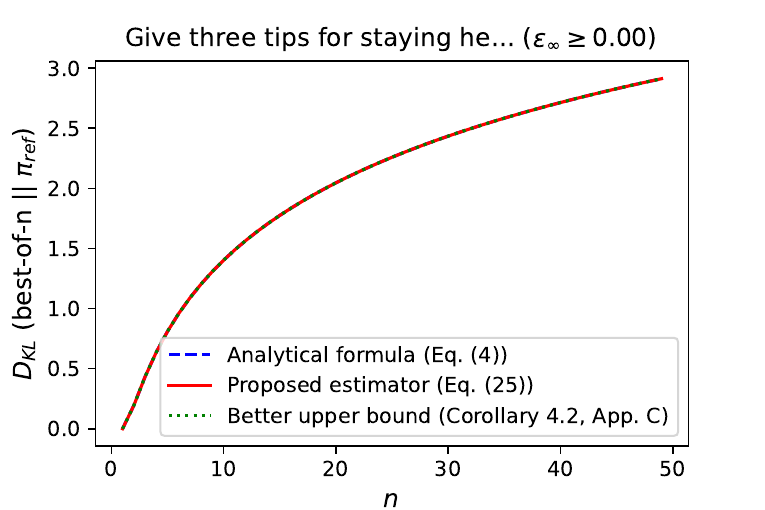}
    \vspace{-.15in}
    \caption{The analytical  formula ($\log(n) - (n-1)/n$), \cref{eq:KL-formula}, better upper bound (Corollary~\ref{thm:bound-with-highest-p}, Appendix~\ref{app:exp}), the proposed estimator, \cref{eq:proposed-KL-estimator}, and the exact KL divergence, for four cherry picked examples from the Alpaca dataset \citep{taori2023stanford} using Gemma 9B IT model \citep{team2024gemma} with reward the log-likelihood of response under the reference model. 
    }
    \label{fig:alpaca}
    \vspace{-.1in}
\end{figure}

In Figure~\ref{fig:alpaca}, we compare the estimates
for four cherry picked examples from the Alpaca dataset \citep{taori2023stanford} using Gemma 9B IT model \citep{team2024gemma} with reward being the log-likelihood of the reference model. Note that for two examples where $\varepsilon_\infty$ is large, i.e., the prompts that induce less entropy in the response, such as {\em ``What is the capital of France?''},
the proposed estimator outperforms the analytical formula in \cref{eq:KL-formula} considerably and lies very close to the better upper bound in Corollary~\ref{thm:bound-with-highest-p}, whereas $\KLn$ (\cref{eq:KL-formula}) is loose even for $n \approx 20$. The details on the prompts used can be found in \cref{app:exp}. 
In \cref{fig:alpaca-2}, we repeat the same experiment but change the reward to the negative of length to prefer more concise responses and see similar trends. We also include experiments with machine translation in \cref{app:translation}.

\begin{figure}[t]
    \centering
    \hspace{-.1in}
    \includegraphics[width=0.52\linewidth]{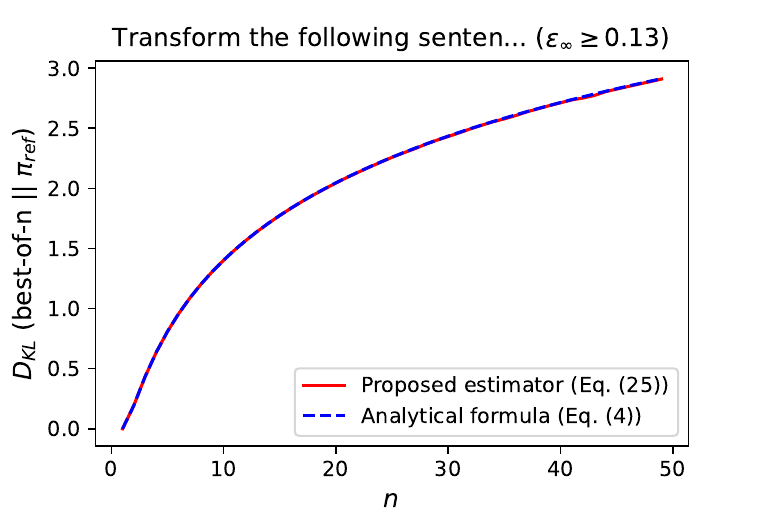}
    \hspace{-.15in}
  \includegraphics[width=0.52\linewidth]{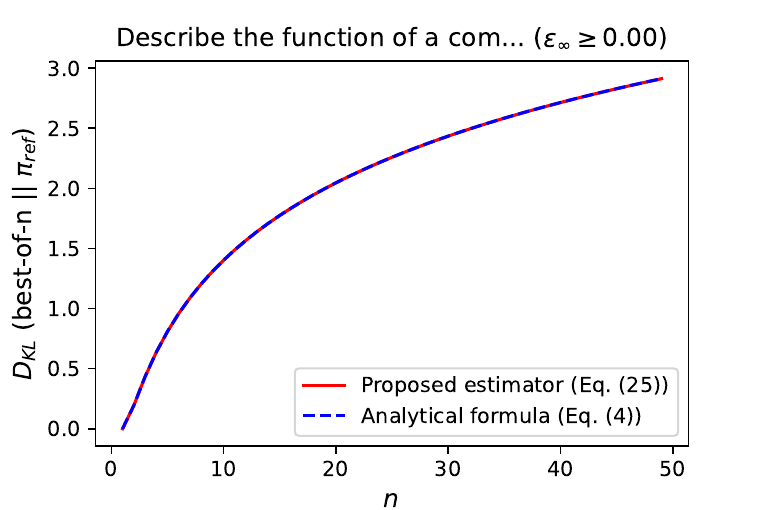}
  \vspace{-.1in}\hspace{-.1in}
   \includegraphics[width=0.52\linewidth]{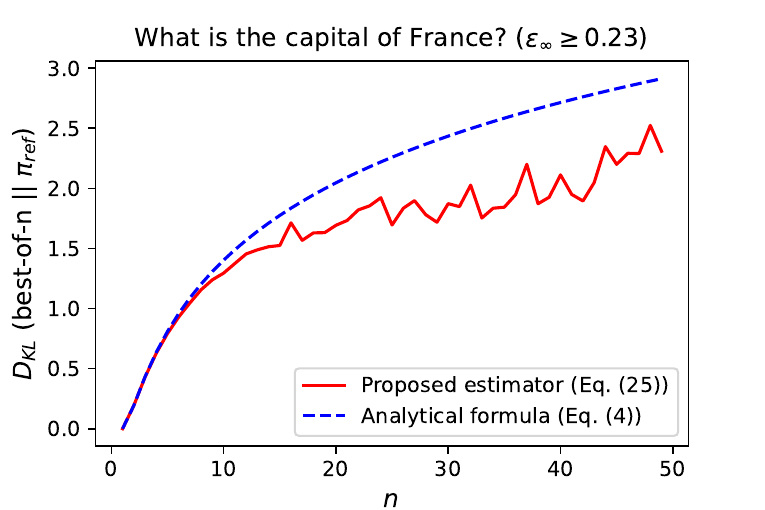}
   \hspace{-.15in}
    \includegraphics[width=0.52\linewidth]{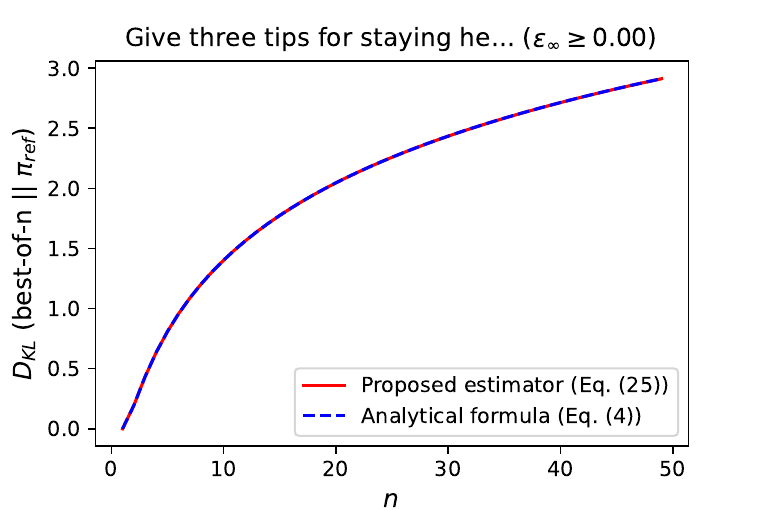}
    \vspace{-.15in}
    \caption{The analytical  formula ($\log(n) - (n-1)/n$), \cref{eq:KL-formula}, better upper bound (Corollary~\ref{thm:bound-with-highest-p}, Appendix~\ref{app:exp}), the proposed estimator, \cref{eq:proposed-KL-estimator}, and the exact KL divergence, for four cherry picked examples from the Alpaca dataset \citep{taori2023stanford} using Gemma 9B IT model \citep{team2024gemma} with reward the negative length of the response. 
    }
    \label{fig:alpaca-2}
    \vspace{-.1in}
\end{figure}
\vspace{-.1in}
\section{Win Rate of the Best-of-n Policy}
\label{sec:win-rate}
So far, we provided theoretical guarantees on the KL divergence of the best-of-$n$ policy with respect to the reference policy. In this section, we extend our study to characterize the {\em win rate} of the best-of-$n$ policy against the reference policy.
Let {\em win} of $\by$ against $\bz$ in context $\bx$ be defined as:  
\begin{align*}
    w_r(\by \!\succ \!\bz | \bx )\! := &  \mathbf{1}(r(\bx, \bm{y})\!\! > \!\! r(\bx, \bm{z})) \!+\! \frac{\mathbf{1}(r(\bx, \bm{y})\! \!=\!\! r(\bx, \bm{z}))}{2} .
    \vspace{-.1in}
\end{align*}
In other words, $w_r(\by \succ \bz | \bx )$ indicates whether response $\by$ wins over response $\bz$ in context $\bx$ using the judge $r.$
Then, let the {\em win rate} of policy $\pi$ over the reference policy $\piref$ for prompt $\bx$ be defined as 
\begin{equation}
    \mathcal{W}_r(\pi(\cdot|\bx) \| \piref(\cdot|\bx)) \hspace{-.03in}:=\hspace{-.03in} E_{\by \sim \pi(\cdot|\bx)} E_{\bz \sim \piref(\cdot|\bx)} w_r(\by \hspace{-.03in}\succ \hspace{-.03in}\bz | \bx ).
    \label{eq:win}
\end{equation}
It is clear that $\mc{W}_r(\piref(\cdot|\bx) \| \piref(\cdot|\bx)) = 0.5$ and the goal of alignment is to improve win rate beyond $0.5$ (with the lowest KL divergence between the two models). 
We further define the following, averaged over prompt distribution $\mu:$
\begin{equation}
    \mc{W}_r^\mu(\pi \| \piref) := E_{\bx \sim \mu}  \mc{W}_r(\pi(\cdot|\bx) \| \piref(\cdot|\bx)).
\end{equation}
Note that it is clear that if $\by \sim \pi(\cdot|\bx)$ and $\bz \sim \piref(\cdot|\bx),$ then $ w_r(\by \succ\ \bz | \bx )$ is an unbiased estimator for win rate of policy $\pi$ against $\piref$.

We analyze the win rate and derive theoretical guarantees.
  Let $\mc{F}_\pi$ and $\mc{F}_\pi^-$ be defined in \cref{eq:F} and \cref{eq:F-minus}, respectively. We define {\em calibrated reward} as:
\begin{equation}
    \mathcal{C_{\piref}}(\bx, \by) := \frac{\mathcal{F}_{\piref}(\by|\bx) + \mathcal{F}_{\piref}^-(\by|\bx)}{2}.
\vspace{-.1in}
\label{eq:calibrated-reward}
\end{equation}
With this definition in place, notice that win rate could be expressed as follows.

\begin{lemma}
  The win rate of any policy $\pi$ against reference policy $\piref$ could be expressed as
    \begin{equation}
         \mc{W}_r(\pi(\cdot|\bx) \| \piref(\cdot|\bx)) =  E_{\by \sim \pi(\cdot|\bx)} \left[\mathcal{C_{\piref}}(\bx, \by)
         \right].
    \end{equation}
\label{lem:win rate-gen}
\vspace{-.3in}
\end{lemma}
Notice that the above result suggests that the win rate of any policy only depends on reward and the reference policy through the calibrated reward function, $\mathcal{C_{\piref}}(\bx, \by).$ Hence, this notion of calibration may be used as a canonical transformation of the reward for preference optimization against a given reference policy. In fact, this transformation is theoretically proposed as the objective in IPO~\citep{azar2024general} and is the key to best-of-$n$ distillation~\citep{gui2024bonbon, amini2024variational, sessa2024bond, yang2024faster} and inference-aware alignment~\citep{balashankar2024infalign}. 

\begin{lemma}
The win rate of best-of-$n$ policy against $\piref$ is given by
\begin{align*}
    \mc{W}_r&(\pi^{(n)}(\cdot|\bx) \| \piref(\cdot|\bx))\\
    &=  \sum_{\by} \left( \mathcal{F}_{\piref}(\bm{y}|\bm{x})^n - \mathcal{F}_{\piref}^{-}(\bm{y}|\bm{x})^n\right)\mathcal{C_{\piref}}(\bx, \by).
\end{align*}
\label{lem:best-of-n-win rate}
\vspace{-.3in}
\end{lemma}
\begin{proof}
This is proved by plugging Lemma~\ref{lemma:best-of-k-pmf} into Lemma~\ref{lem:win rate-gen}.
\end{proof}

Our next result is an upper bound on the win rate of best-of-$n$ policy.
Intuitively, observe that the win rate could be estimated by drawing $(n+1)$ samples from $\piref,$ associating $n$ to the best-of-$n$ model and the remaining one to the reference model. Hence, the best-of-$n$ model gets $n$-to-$1$ chances of winning against the reference model, unless there is a draw due to samples with the same reward value. Thus, intuitively $\mc{W}_r(\pi^{(n)}(\cdot|\bx) \| \piref(\cdot|\bx)) \approx \frac{n}{n+1}$. We formalize this as an upper bound on the win rate.
\begin{theorem}
For all, $n$, and all $\bx$, the win rate of best-of-$n$ policy is upper bounded by
\begin{equation}
    \mc{W}_r(\pi^{(n)}(\cdot|\bx) \| \piref(\cdot|\bx)) \leq \frac{n}{n+1}.
\end{equation}    
\label{thm:win rate-bon}
\vspace{-.3in}
\end{theorem}
In the rest of this section, we derive bounds on the gap between this upper bound and the actual win rate:
\begin{equation}
   \gapw :=  \frac{n}{n+1} - \mc{W}_r(\pi^{(n)}(\cdot|\bx) \| \piref(\cdot|\bx)) \geq 0.
\label{eq:win-rate-gap}
\end{equation}

\subsection{Upper Bounds on the Win Rate Gap}
Unlike the KL divergence, it is clear that $ \gapw$ could not grow unbounded, and is upper bounded by $\frac{1}{2}.$ 
\begin{theorem}
The win rate gap is upper bounded by %
\begin{equation}
     \gapw \leq \frac{n-1}{2}  e^{-H_2( \piref|\bm{x})},
\end{equation}
where $H_2(\cdot)$ denotes the R\'enyi entropy of order $2$ defined in \cref{eq:renyi}.
\label{thm:win rate-gap-upper}
\end{theorem}

\begin{corollary}
\label{cor:win rate-gap-upper}
Let $\piref( \bm{y}| \bm{x}) \leq \delta$ for all $\by \in \mathcal{Y}^*,$ i.e., $\piref$ is $\delta$-bound. Then, 
\begin{equation}
     \gapw \leq \frac{n-1}{2} \delta.
\vspace{-.15in}
\end{equation}
\end{corollary}
\begin{proof}
    The proof follows by noticing that $H_2( \piref|\bm{x}) \geq \log (1/\delta)$ and invoking Theorem~\ref{thm:win rate-gap-upper}.
\end{proof}

Given this upper bound, the win rate of best-of-$n$ would be fairly close to $\frac{n}{n+1}$ if $\piref$ is a $\delta$-bound model, and $n$ is sufficiently small such that $n \delta \ll 1$, by combining \cref{thm:win rate-bon} and \cref{cor:win rate-gap-upper}, we get 
\begin{equation}
    \mc{W}_r(\pi^{(n)}(\cdot|\bx) \| \piref(\cdot|\bx)) \approx \frac{n}{n+1},
\end{equation}
which is the claim of~\citet[Theorem 2]{gui2024bonbon} for the win rate of best-of-$n$

\subsection{Lower Bounds on the Win Rate Gap}
Our next result characterizes the cases where the gap is bounded away from $0.$
\begin{theorem}
Let $\varepsilon_\infty > 0$ be defined in \cref{eq:eps-infty}. Then,
\begin{align*}
     \gapw \geq &\frac{n}{n+1}(1- (1-\varepsilon_\infty)^{n+1}) \\&-  (1 - (1-\varepsilon_\infty)^n) \left(1 - \frac{\varepsilon_\infty}{2}\right) > 0.
\end{align*}
\vspace{-.2in}
\label{thm:win rate-gap-lower}
\end{theorem}

\begin{corollary}
    As $n \to \infty,$ we have
    \begin{equation}
        \gapw \geq  \frac{\varepsilon_\infty}{2}  (1 + o_n(1)).
    \end{equation}
\label{cor:win-upper}
\vspace{-.2in}
\end{corollary}
As $n\to \infty,$  $\gapw  \to \frac{\varepsilon_\infty}{2}$, which is bounded from $0$.

\section{Rewind-and-Repeat: \\\hspace{0.17in}Rejection Sampling Beyond Best-of-n}
\label{sec:rewind-repeat}
The best-of-$n$ policy is a form of rejection sampling. Another form is called rewind-and-repeat, where the process of generating a response and scoring it is repeated until a certain threshold on reward is met \citep{kim2024guaranteed}. A more involved blockwise variant of this process is recently used by \citet{lirain2024}.
Formally, let $\bm{x}$ be a given input prompt to the  model, and let $\{\bm{y}_k\}_{k =1}^\infty$ be a sequence of infinite i.i.d. samples drawn from $\piref(\cdot | \bm{x})$. Then, rewind-and-repeat accepts $\by_M$ such that
\begin{align}
   r (\bx, \by_{M})  \geq \Phi \quad \text{and}\quad \forall k <M:
    r(\bx, \by_{k}) < \Phi,
\end{align}
where $\Phi \in \mathbb{R}$ is the threshold on reward.
In other words, $ \by_{M}$ is the first draw whose reward reaches a certain threshold $\Phi.$ We also call $M$ the (random) number of trials until the threshold is met, which determines the cost of inference from the model.
We denote the resulting policy by $\pi_{\Phi}.$

It is natural to ask: {\em how do the win rate vs KL tradeoffs of rewind-and-repeat compare with that of best-of-$n$?} To answer this question, first we define
\begin{equation}
    \pg := E_{\by \sim \piref(\cdot|\bx)} [ \mathbf{1}(r(\bx, \by) \geq \Phi) ]
\end{equation}
as the probability of drawing a sample from the reference policy that meets the threshold.
Hence, the expected number of trials to output an  outcome is
    $E[M] = 1 / \pg.$

Next, let us derive the PMF of rewind-and-repeat policy.
\begin{lemma}
    The probability mass function (PMF) of the rewind-and-repeat policy is given by
    \begin{equation}
        \pi_{\Phi}(\by | \bx) = \begin{cases}
            \frac{\piref(\by | \bx)}{\pg} \quad \quad \hfill \text{if}~ r(\bx, \by) \geq \Phi
            \\
            0 \quad \quad \hfill \text{if}~ r(\bx, \by) < \Phi
        \end{cases}.
    \end{equation}
\vspace{-.2in}
\label{lem:rewind-repeat-pmf}
\end{lemma}
The proofs are deferred to \cref{app:rewind-repeat}.
Given its PMF, next we derive KL divergence and win rate of the rewind-and-repeat policy and the reference policy.
\begin{lemma}
    We have
    \begin{align}
        \KL(\pi_{\Phi}(\cdot | \bx) \| \piref (\cdot | \bx)) &= \log \frac{1}{\pg},\\
         \mc{W}_r(\pi_{\Phi}(\cdot | \bx) \| \piref (\cdot | \bx)) &= 1- \frac{1}{2}\pg.
    \end{align}
\vspace{-.3in}
\label{lem:KL-win-rewind-repeat}
\end{lemma}
Thus, by sweeping $\Phi$ in $\mathbb{R}$, we will effectively sweep $w_\Phi$ in $[0, 1],$ and obtain the respective win rate vs KL divergence tradeoff for the rewind-and-repeat policy. Note that the KL divergence was recently derived by \citet[Appendix A.5]{kim2024guaranteed} and we provide a proof for completeness.

So far, we derived a characterization of the KL divergence of the rewind-and-repeat policy. However, when the number of outcomes of a model is large, similarly to the case of best-of-$n$, estimating the KL divergence is intractable. 

Our main result in this section is an unbiased estimator of the KL divergence of the rewind-and-repeat procedure.

\begin{theorem}
    For $n \geq 1$, let $H_n := \sum^n_{i=1} \frac{1}{i}$ be the $n$-th Harmonic number and $H_0 = 0$. Further, let $M$ be the number of trials to achieve an outcome in the rewind-and-repeat policy. Then, 
    \begin{equation}
    \KL(\pi_{\Phi}(\cdot | \bx) \| \piref (\cdot | \bx)) = E[H_{M-1}].
    \end{equation}
\label{thm:rewind-repeat-estimator}
\vspace{-0.3in}
\end{theorem}
Hence, we propose $H_{M-1}$ as an (unbiased) estimator for the KL divergence of the rewind-and-repeat procedure with respect to the reference policy.

\section{Win rate vs KL Divergence Tradeoffs}
\label{sec:tradeoffs}
Thus far, we characterized the KL divergence and win rate of best-of-$n.$ In practice, it is customary to compare different alignment methods based on their win rate at a certain KL divergence from the reference policy. %
Note that
Theorem~\ref{thm:KL-upper-bound} implies that the win rate (or expected reward) vs KL tradeoffs reported in the literature that use the analytical formula in \cref{eq:KL-formula}~\citep{gao2023scaling,go2023compositional,mudgal2023controlled, scheurer2023training}  are conservative and the actual tradeoff curve of the best-of-$n$ policy is in fact guaranteed to be no worse than what is reported. To further substantiate this point, let us revisit Example~\ref{toy-example} and report the win rate vs KL divergence tradeoff curve (Figure~\ref{fig:example1-winrate}), where we used the actual win rate in all cases.\footnote{In practice, the win rate could be estimated using the unbiased estimator given by the win random variable in \cref{eq:win}.}
The actual win rate vs KL divergence tradeoff is more favorable than that portrayed by using the formula in \cref{eq:KL-formula}. In this example, the best-of-$n$ policy in the limit of large $n,$ reaches a KL divergence of $\log(2)$ and a win rate of $0.75$.
\begin{definition}
 Let $W:\mathbb{R}^+ \to [0,1]$ be a function that takes in $D\geq 0$ and outputs $W(D).$ 
 Let $D_n = \KL(\pi^{(n)}(\cdot | \bx) \| \piref(\cdot | \bx)).$
 We say that {\em $W$ is an upper bound on the tradeoff curve of best-of-$n$} if for all $n,$ 
 $
       \mc{W}_r (\pi^{(n)}(\cdot|\bx) \| \piref(\cdot|\bx)) \leq W(D_n).
 $
 Alternatively, we say that {\em $W$ is a lower bound on the tradeoff curve of best-of-$n$} if for all $n,$
 $
      \mc{W}_r (\pi^{(n)}(\cdot|\bx)\| \piref(\cdot|\bx)) \geq W(D_n).
$
 \vspace{-.1in}
\end{definition}
  
We can turn our existing bounds into straightforward lower and upper bounds on the tradeoff curve for best-of-$n$. (see \cref{app:proof-tradeoff}).
 We conjecture a tighter upper bound.
  \begin{conj}
  For any $\bx$ and a given $\piref$, let the function $W_0(D) = \ell^{-1}(D)$ where for all $\tau \in [0.5, 1),$\vspace{-.1in}
  \begin{align*}
         \ell(\tau) = \log \frac{\tau}{1-\tau} +\frac{1}{\tau} - 2 .
\end{align*}
$W_0$ is an upper bound on the tradeoff curve of best-of-$n$.
 \label{conj:tradeoff-upper-bound}
  \end{conj}

\begin{figure}[t]
    \centering
    \vspace{-.1in}
    \includegraphics[width=0.85\linewidth]{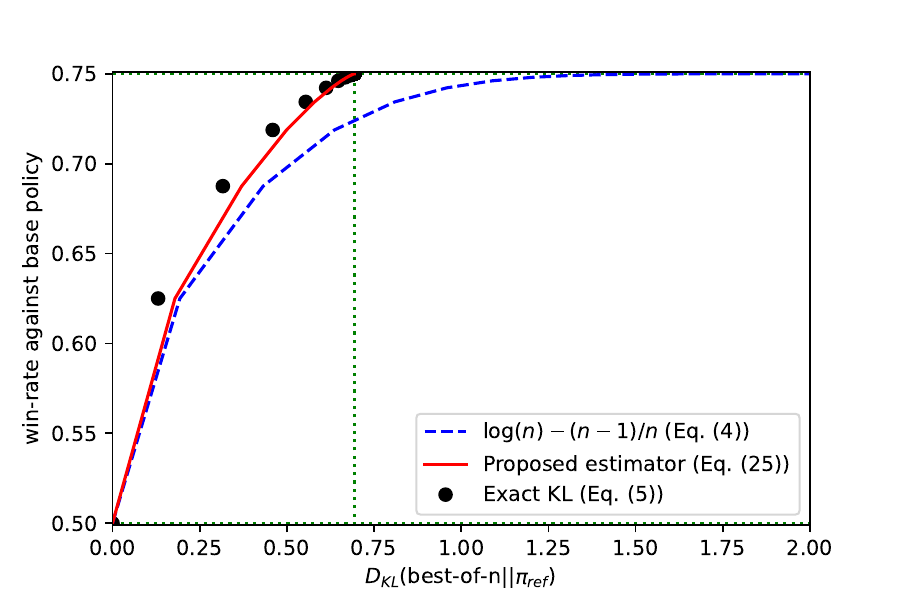}
    \vspace{-.05in}
    \caption{The win rate against reference policy vs KL divergence tradeoff curve for the best-of-$n$ policy. We have used the analytical  formula ($\log(n) - (n-1)/n$), \cref{eq:KL-formula}, the exact KL divergence, \cref{eq:KL-example-1}, and the proposed estimator, \cref{eq:proposed-KL-estimator}, for producing the tradeoffs from Example~\ref{toy-example}, illustrating a  case where the actual win rate vs KL divergence tradeoff curve for the best-of-$n$ policy is more favorable than the one predicted if using the upper bound formula, $\log(n) - (n-1)/n$. }
    \vspace{-.1in}
    \label{fig:example1-winrate}
\end{figure}

\begin{figure*}[t!]
    \centering
    \begin{minipage}[b]{0.28\textwidth}
        \centering
        \includegraphics[width=\textwidth]{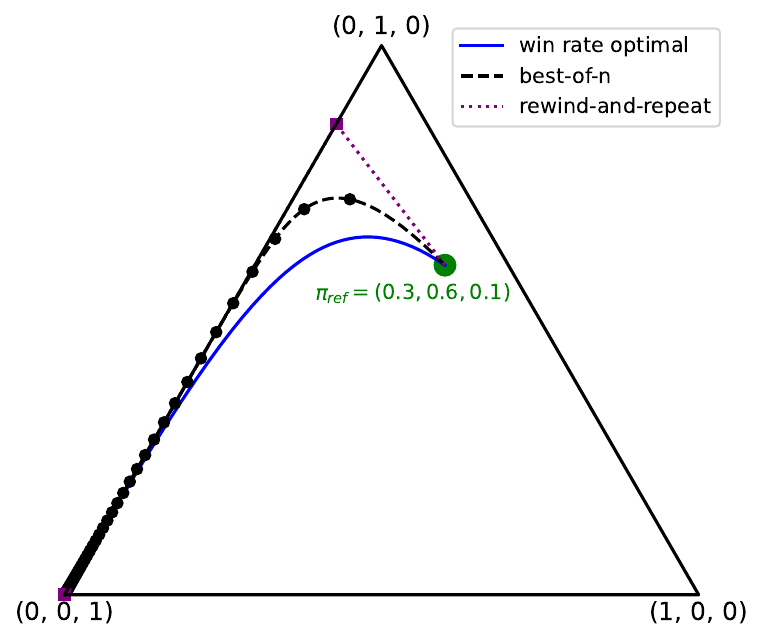}
        \vspace{-.2in}
        \caption{The set of solutions for \cref{example-ternary}: {\em win rate optimal} is achieved by varying $\beta$ in \cref{eq:optimal-solution}, {\em best-of-$n$} is given by \cref{lemma:best-of-k-pmf}, and {\em rewind-and-repeat} is given by \cref{lem:rewind-repeat-pmf}.}
        \label{fig:simplex-example}
    \end{minipage}%
    ~~
    \begin{minipage}[b]{0.34\textwidth}
        \centering
        \includegraphics[width=\textwidth]{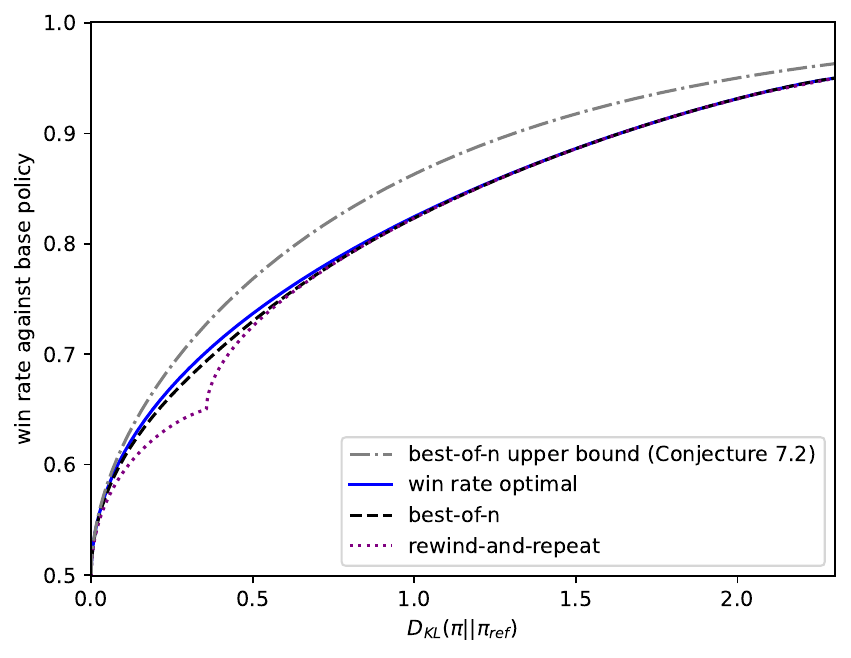}
        \vspace{-.2in}
        \caption{Win rate vs KL tradeoff for \cref{example-ternary}. The tradeoff curve of {\em Best-of-$n$} is close to {\em win rate optimal}, and both are better than rewind-and-repeat.}
        \label{fig:simplex-winrate}
    \end{minipage}
    ~~~
    \begin{minipage}[b]{0.34\textwidth}
        \centering
        \includegraphics[width=\textwidth]{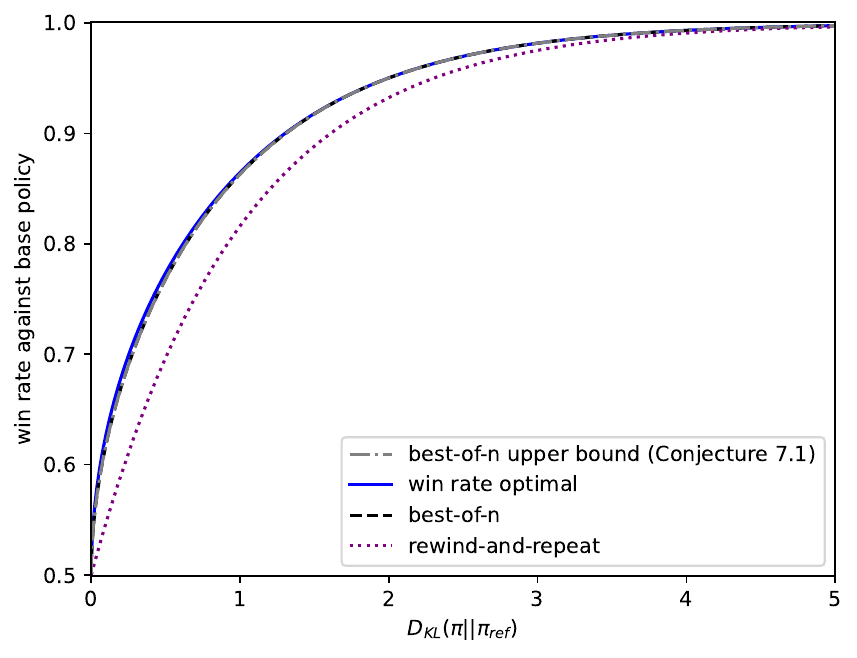}
        \vspace{-.2in}
        \caption{Win rate vs KL tradeoff for \cref{example-limit}. The tradeoff curve of {\em Best-of-$n$} is close to {\em win rate optimal} and the limit behavior, and both are better than rewind-and-repeat.}
        \label{fig:limit-winrate}
    \end{minipage}
\end{figure*}

\begin{example}
We consider a ternary language model with alphabet $\mc{X} = \{0, 1, 2\}$, ordered from least preferred to most preferred, with probabilities given by $\piref = (0.3, 0.6, 0.1)$. Hence, the calibrated reward in~\cref{eq:calibrated-reward} is given by $(0.3, 0.75, 0.95).$ The set of solutions to the KL-regularized RL problem are given by \cref{eq:optimal-solution}. We also compute the best-of-$n$ solutions (for continuous $n$) using \cref{lemma:best-of-k-pmf} and rewind-and-repeat using \cref{lem:rewind-repeat-pmf}. Before discussing the win rate vs KL divergence tradeoffs, we first visualize the set of solutions on the probability simplex in \cref{fig:simplex-example} and observe that the solutions could be very different.
When we consider the win rate vs KL tradeoffs for this example (\cref{fig:simplex-winrate}), we observe that the tradeoffs are strikingly close for best-of-$n$ and win rate optimal models, matching recent findings of~\citet{yang2024asymptotics, gui2024bonbon}.
\label{example-ternary}
\end{example}

\vspace{-.05in}
\begin{example}
We consider a language model with low probability outcomes, and a calibrated reward, and plot win rate vs KL divergence tradeoffs in \cref{fig:limit-winrate}, and observe that in this asymptotic regime where we observe that the tradeoff curve offered by best-of-$n$ is almost optimal and offers better tradeoffs compared to rewind-and-repeat.
\label{example-limit}
\end{example}

\section{Conclusion}
We studied the best-of-$n$ alignment policy and derived its probability mass function (Lemma~\ref{lemma:best-of-k-pmf}). We proved that an analytical formula used in the literature for the KL divergence of the best-of-$n$ policy with the reference policy, $\log(n) - (n-1)/n$ in \cref{eq:KL-formula}, is false, and only an upper bound on the KL divergence (Theorem~\ref{thm:KL-upper-bound}). We derived bounds on the gap between this formula and the KL divergence where we roughly showed the following: Let $\bm{y}$ be a draw from the best-of-$n$ policy. Let $\varepsilon_n$ be the probability mass of $\bm{y}$ under the base model. Then, if $n \varepsilon_n \ll 1,$ the gap between the formula in \cref{eq:KL-formula} and the exact KL divergence is small (Theorem~\ref{thm:gap-upper-bound}); and if $n \varepsilon_n \gg 1$, the gap between the two may be large and unbounded (Theorem~\ref{thm:lower-gap-nontrivial}). We proposed a new estimator for the KL divergence (Definition~\ref{approx-KL-estimator}), which we demonstrated to capture the behavior of the KL divergence on several numerical experiments. 
We showed that the win rate of best-of-$n$ against the reference policy is upper bounded by $n/(n+1)$ (\cref{thm:win rate-bon}). Similarly to the KL divergence, we provided upper (\cref{thm:win rate-gap-upper}) and lower (\cref{thm:win rate-gap-lower}) bounds on the gap between the actual win rate and the bound. We compared best-of-$n$ with another form of rejection sampling through rewind-and-repeat and showed its superiority both theoretically and empirically. We also extended the bounds to blockwise best-of-$n$~\citep{mudgal2023controlled}.

While our results showed that best-of-$n$ offers better tradeoffs on reward vs KL divergence (where KL divergence captures preservation of the core model capabilities) compared to rewind-and-repeat, the latter is more effective in driving reward up for a given compute budget which is important in test-time compute scaling. 
Hence, it remains to be seen how to best design a method that achieves Pareto optimal tradeoffs between compute, reward, and KL divergence (which captures preservation of capabilities other than what is captured by reward). This might involve combining the rewind-and-repeat with best-of-$n$ and could be an area for future work.

\section*{Impact Statement}

This paper presents theoretical investigations of best-of-$n$ sampling and other test-time rejection sampling algorithms, which is a simple yet effective method for test-time alignment of generative models. One of the major findings of this paper is that a widely used formula for KL divergence of the best-of-$n$ policy and the reference policy is a theoretical upper bound on the actual KL divergence. This may help ensure that the capabilities of the reference model are largely preserved in the aligned model. For example, this may help compliance or risk-management teams preserve safety by guaranteeing that the policy that is served does not drift too far from a safe reference policy. 

Our work also showed that best-of-$n$ is an effective (and almost optimal) test-time alignment method which comes with theoretical guarantees on win rate vs KL divergence tradeoffs motivating its use for improving the capabilities and safety of models. On the flip side, this also shows why best-of-$n$ with an adversarial reward may be used to effectively jailbreak generative models. We hope future work can benefit from our findings in making best-of-$n$ more effective, and can also devise safeguards against best-of-$n$ jailbreaks.

\bibliography{best-of-n}

\begin{thebibliography}{37}
\providecommand{\natexlab}[1]{#1}
\providecommand{\url}[1]{\texttt{#1}}
\expandafter\ifx\csname urlstyle\endcsname\relax
  \providecommand{\doi}[1]{doi: #1}\else
  \providecommand{\doi}{doi: \begingroup \urlstyle{rm}\Url}\fi

\bibitem[Amini et~al.(2025)Amini, Vieira, and Cotterell]{amini2024variational}
Amini, A., Vieira, T., and Cotterell, R.
\newblock Variational best-of-n alignment.
\newblock \emph{International Conference on Learning Representations (ICLR)}, 2025.

\bibitem[Azar et~al.(2024)Azar, Guo, Piot, Munos, Rowland, Valko, and Calandriello]{azar2024general}
Azar, M.~G., Guo, Z.~D., Piot, B., Munos, R., Rowland, M., Valko, M., and Calandriello, D.
\newblock A general theoretical paradigm to understand learning from human preferences.
\newblock In \emph{International Conference on Artificial Intelligence and Statistics}, pp.\  4447--4455. PMLR, 2024.

\bibitem[Bai et~al.(2022)Bai, Jones, Ndousse, Askell, Chen, DasSarma, Drain, Fort, Ganguli, Henighan, et~al.]{bai2022training}
Bai, Y., Jones, A., Ndousse, K., Askell, A., Chen, A., DasSarma, N., Drain, D., Fort, S., Ganguli, D., Henighan, T., et~al.
\newblock Training a helpful and harmless assistant with reinforcement learning from human feedback.
\newblock \emph{arXiv preprint arXiv:2204.05862}, 2022.

\bibitem[Balashankar et~al.(2025)Balashankar, Sun, Berant, Eisenstein, Collins, Hutter, Lee, Nagpal, Prost, Sinha, Suresh, and Beirami]{balashankar2024infalign}
Balashankar, A., Sun, Z., Berant, J., Eisenstein, J., Collins, M., Hutter, A., Lee, J., Nagpal, C., Prost, F., Sinha, A., Suresh, A.~T., and Beirami, A.
\newblock {InfAlign}: Inference-aware language model alignment.
\newblock \emph{International Conference on Machine Learning (ICML)}, 2025.

\bibitem[Beetham et~al.(2024)Beetham, Chakraborty, Wang, Huang, Bedi, and Shah]{beetham2024liar}
Beetham, J., Chakraborty, S., Wang, M., Huang, F., Bedi, A.~S., and Shah, M.
\newblock Liar: Leveraging alignment (best-of-n) to jailbreak llms in seconds.
\newblock \emph{arXiv preprint arXiv:2412.05232}, 2024.

\bibitem[Brown et~al.(2024)Brown, Juravsky, Ehrlich, Clark, Le, R{\'e}, and Mirhoseini]{brown2024large}
Brown, B., Juravsky, J., Ehrlich, R., Clark, R., Le, Q.~V., R{\'e}, C., and Mirhoseini, A.
\newblock Large language monkeys: Scaling inference compute with repeated sampling.
\newblock \emph{arXiv preprint arXiv:2407.21787}, 2024.

\bibitem[Christiano et~al.(2017)Christiano, Leike, Brown, Martic, Legg, and Amodei]{christiano2017deep}
Christiano, P.~F., Leike, J., Brown, T., Martic, M., Legg, S., and Amodei, D.
\newblock Deep reinforcement learning from human preferences.
\newblock \emph{Advances in neural information processing systems}, 30, 2017.

\bibitem[Coste et~al.(2024)Coste, Anwar, Kirk, and Krueger]{coste2023reward}
Coste, T., Anwar, U., Kirk, R., and Krueger, D.
\newblock Reward model ensembles help mitigate overoptimization.
\newblock \emph{International Conference on Learning Representations (ICLR)}, 2024.

\bibitem[Eisenstein et~al.(2024)Eisenstein, Nagpal, Agarwal, Beirami, D'Amour, Dvijotham, Fisch, Heller, Pfohl, Ramachandran, Shaw, and Berant]{eisenstein2023helping}
Eisenstein, J., Nagpal, C., Agarwal, A., Beirami, A., D'Amour, A., Dvijotham, D., Fisch, A., Heller, K., Pfohl, S., Ramachandran, D., Shaw, P., and Berant, J.
\newblock Helping or herding? reward model ensembles mitigate but do not eliminate reward hacking.
\newblock \emph{Conference on Language Modeling (COLM)}, 2024.

\bibitem[Gao et~al.(2023)Gao, Schulman, and Hilton]{gao2023scaling}
Gao, L., Schulman, J., and Hilton, J.
\newblock Scaling laws for reward model overoptimization.
\newblock In \emph{International Conference on Machine Learning}, pp.\  10835--10866. PMLR, 2023.

\bibitem[Gemma et~al.(2024)Gemma, Riviere, Pathak, Sessa, Hardin, Bhupatiraju, Hussenot, Mesnard, Shahriari, Ram{\'e}, et~al.]{team2024gemma}
Gemma, T., Riviere, M., Pathak, S., Sessa, P.~G., Hardin, C., Bhupatiraju, S., Hussenot, L., Mesnard, T., Shahriari, B., Ram{\'e}, A., et~al.
\newblock Gemma 2: Improving open language models at a practical size.
\newblock \emph{arXiv preprint arXiv:2408.00118}, 2024.

\bibitem[Go et~al.(2023)Go, Korbak, Kruszewski, Rozen, and Dymetman]{go2023compositional}
Go, D., Korbak, T., Kruszewski, G., Rozen, J., and Dymetman, M.
\newblock Compositional preference models for aligning {LM}s.
\newblock \emph{arXiv preprint arXiv:2310.13011}, 2023.

\bibitem[Gui et~al.(2024)Gui, G{\^a}rbacea, and Veitch]{gui2024bonbon}
Gui, L., G{\^a}rbacea, C., and Veitch, V.
\newblock {BoNBoN} alignment for large language models and the sweetness of best-of-n sampling.
\newblock \emph{Neural Information Processing Systems (NeurIPS)}, December 2024.

\bibitem[Hilton \& Gao(2022)Hilton and Gao]{hilton2022measuring}
Hilton, J. and Gao, L.
\newblock Measuring {Goodhart’s} law, April 2022.
\newblock URL \url{https://openai.com/research/measuring-goodharts-law}.
\newblock Accessed: 2024-01-03.

\bibitem[Hughes et~al.(2024)Hughes, Price, Lynch, Schaeffer, Barez, Koyejo, Sleight, Jones, Perez, and Sharma]{hughes2024best}
Hughes, J., Price, S., Lynch, A., Schaeffer, R., Barez, F., Koyejo, S., Sleight, H., Jones, E., Perez, E., and Sharma, M.
\newblock Best-of-n jailbreaking.
\newblock \emph{arXiv preprint arXiv:2412.03556}, 2024.

\bibitem[Kim et~al.(2025)Kim, Thonet, Rozen, Lee, Jung, and Dymetman]{kim2024guaranteed}
Kim, M., Thonet, T., Rozen, J., Lee, H., Jung, K., and Dymetman, M.
\newblock Guaranteed generation from large language models.
\newblock \emph{International Conference on Learning Representations (ICLR)}, 2025.

\bibitem[Korbak et~al.(2022{\natexlab{a}})Korbak, Elsahar, Kruszewski, and Dymetman]{korbak2022reinforcement}
Korbak, T., Elsahar, H., Kruszewski, G., and Dymetman, M.
\newblock On reinforcement learning and distribution matching for fine-tuning language models with no catastrophic forgetting.
\newblock \emph{Advances in Neural Information Processing Systems}, 35:\penalty0 16203--16220, 2022{\natexlab{a}}.

\bibitem[Korbak et~al.(2022{\natexlab{b}})Korbak, Perez, and Buckley]{korbak2022rl}
Korbak, T., Perez, E., and Buckley, C.
\newblock {RL with KL penalties is better viewed as Bayesian inference}.
\newblock In \emph{Findings of the Association for Computational Linguistics: EMNLP 2022}, pp.\  1083--1091, 2022{\natexlab{b}}.

\bibitem[Li et~al.(2023)Li, Beirami, Sanjabi, and Smith]{li2023tilted}
Li, T., Beirami, A., Sanjabi, M., and Smith, V.
\newblock On tilted losses in machine learning: Theory and applications.
\newblock \emph{Journal of Machine Learning Research}, 24\penalty0 (142):\penalty0 1--79, 2023.

\bibitem[Li et~al.(2024)Li, Wei, Zhao, Zhang, and Zhang]{lirain2024}
Li, Y., Wei, F., Zhao, J., Zhang, C., and Zhang, H.
\newblock Rain: Your language models can align themselves without finetuning.
\newblock In \emph{The Twelfth International Conference on Learning Representations}, 2024.

\bibitem[Mroueh(2024)]{mroueh2024information}
Mroueh, Y.
\newblock Information theoretic guarantees for policy alignment in large language models.
\newblock \emph{arXiv preprint arXiv:2406.05883}, 2024.

\bibitem[Mudgal et~al.(2024)Mudgal, Lee, Ganapathy, Li, Wang, Huang, Chen, Cheng, Collins, Strohman, Chen, Beutel, and Beirami]{mudgal2023controlled}
Mudgal, S., Lee, J., Ganapathy, H., Li, Y., Wang, T., Huang, Y., Chen, Z., Cheng, H.-T., Collins, M., Strohman, T., Chen, J., Beutel, A., and Beirami, A.
\newblock Controlled decoding from language models.
\newblock \emph{International Conference on Machine Learning (ICML)}, 2024.

\bibitem[Nakano et~al.(2021)Nakano, Hilton, Balaji, Wu, Ouyang, Kim, Hesse, Jain, Kosaraju, Saunders, et~al.]{nakano2021webgpt}
Nakano, R., Hilton, J., Balaji, S., Wu, J., Ouyang, L., Kim, C., Hesse, C., Jain, S., Kosaraju, V., Saunders, W., et~al.
\newblock {WebGPT}: Browser-assisted question-answering with human feedback.
\newblock \emph{arXiv preprint arXiv:2112.09332}, 2021.

\bibitem[Ouyang et~al.(2022)Ouyang, Wu, Jiang, Almeida, Wainwright, Mishkin, Zhang, Agarwal, Slama, Ray, et~al.]{ouyang2022training}
Ouyang, L., Wu, J., Jiang, X., Almeida, D., Wainwright, C.~L., Mishkin, P., Zhang, C., Agarwal, S., Slama, K., Ray, A., et~al.
\newblock Training language models to follow instructions with human feedback.
\newblock \emph{arXiv preprint arXiv:2203.02155}, 2022.

\bibitem[Qiu et~al.(2024)Qiu, Lu, Zeng, Guo, Geng, Wang, Huang, Wu, and Wang]{qiu2024treebon}
Qiu, J., Lu, Y., Zeng, Y., Guo, J., Geng, J., Wang, H., Huang, K., Wu, Y., and Wang, M.
\newblock Treebon: Enhancing inference-time alignment with speculative tree-search and best-of-n sampling.
\newblock \emph{arXiv preprint arXiv:2410.16033}, 2024.

\bibitem[Rafailov et~al.(2023)Rafailov, Sharma, Mitchell, Ermon, Manning, and Finn]{rafailov2023direct}
Rafailov, R., Sharma, A., Mitchell, E., Ermon, S., Manning, C.~D., and Finn, C.
\newblock Direct preference optimization: Your language model is secretly a reward model.
\newblock \emph{arXiv preprint arXiv:2305.18290}, 2023.

\bibitem[Scheurer et~al.(2023)Scheurer, Campos, Korbak, Chan, Chen, Cho, and Perez]{scheurer2023training}
Scheurer, J., Campos, J.~A., Korbak, T., Chan, J.~S., Chen, A., Cho, K., and Perez, E.
\newblock Training language models with language feedback at scale.
\newblock \emph{arXiv preprint arXiv:2303.16755}, 2023.

\bibitem[Sessa et~al.(2025)Sessa, Dadashi, Hussenot, Ferret, Vieillard, Ram{\'e}, Shariari, Perrin, Friesen, Cideron, et~al.]{sessa2024bond}
Sessa, P.~G., Dadashi, R., Hussenot, L., Ferret, J., Vieillard, N., Ram{\'e}, A., Shariari, B., Perrin, S., Friesen, A., Cideron, G., et~al.
\newblock Bond: Aligning llms with best-of-n distillation.
\newblock \emph{International Conference on Learning Representations (ICLR)}, 2025.

\bibitem[Snell et~al.(2024)Snell, Lee, Xu, and Kumar]{snell2024scaling}
Snell, C., Lee, J., Xu, K., and Kumar, A.
\newblock Scaling llm test-time compute optimally can be more effective than scaling model parameters.
\newblock \emph{arXiv preprint arXiv:2408.03314}, 2024.

\bibitem[Stiennon et~al.(2020)Stiennon, Ouyang, Wu, Ziegler, Lowe, Voss, Radford, Amodei, and Christiano]{stiennon2020learning}
Stiennon, N., Ouyang, L., Wu, J., Ziegler, D., Lowe, R., Voss, C., Radford, A., Amodei, D., and Christiano, P.~F.
\newblock Learning to summarize with human feedback.
\newblock \emph{Advances in Neural Information Processing Systems}, 33:\penalty0 3008--3021, 2020.

\bibitem[Sun et~al.(2024)Sun, Haider, Zhang, Yang, Qiu, Yin, Wang, Bartlett, and Zanette]{sunfast2024}
Sun, H., Haider, M., Zhang, R., Yang, H., Qiu, J., Yin, M., Wang, M., Bartlett, P., and Zanette, A.
\newblock Fast best-of-n decoding via speculative rejection.
\newblock In \emph{The Thirty-eighth Annual Conference on Neural Information Processing Systems}, 2024.

\bibitem[Taori et~al.(2023)Taori, Gulrajani, Zhang, Dubois, Li, Guestrin, Liang, and Hashimoto]{taori2023stanford}
Taori, R., Gulrajani, I., Zhang, T., Dubois, Y., Li, X., Guestrin, C., Liang, P., and Hashimoto, T.~B.
\newblock Stanford alpaca: An instruction-following llama model, 2023.

\bibitem[Touvron et~al.(2023)Touvron, Martin, Stone, Albert, Almahairi, Babaei, Bashlykov, Batra, Bhargava, Bhosale, et~al.]{touvron2023llama}
Touvron, H., Martin, L., Stone, K., Albert, P., Almahairi, A., Babaei, Y., Bashlykov, N., Batra, S., Bhargava, P., Bhosale, S., et~al.
\newblock Llama 2: Open foundation and fine-tuned chat models.
\newblock \emph{arXiv preprint arXiv:2307.09288}, 2023.

\bibitem[Yang et~al.(2024{\natexlab{a}})Yang, Salamatian, Sun, Suresh, and Beirami]{yang2024asymptotics}
Yang, J.~Q., Salamatian, S., Sun, Z., Suresh, A.~T., and Beirami, A.
\newblock Asymptotics of language model alignment.
\newblock \emph{International Symposium on Information Theory (ISIT)}, July 2024{\natexlab{a}}.

\bibitem[Yang \& Klein(2021)Yang and Klein]{yang-klein-2021-fudge}
Yang, K. and Klein, D.
\newblock {FUDGE}: Controlled text generation with future discriminators.
\newblock In \emph{Proceedings of the 2021 Conference of the North American Chapter of the Association for Computational Linguistics: Human Language Technologies}, pp.\  3511--3535, Online, June 2021. Association for Computational Linguistics.
\newblock \doi{10.18653/v1/2021.naacl-main.276}.
\newblock URL \url{https://aclanthology.org/2021.naacl-main.276}.

\bibitem[Yang et~al.(2024{\natexlab{b}})Yang, Mei, Dai, Wen, Cen, Schuurmans, Chi, and Dai]{yang2024faster}
Yang, T., Mei, J., Dai, H., Wen, Z., Cen, S., Schuurmans, D., Chi, Y., and Dai, B.
\newblock Faster wind: Accelerating iterative best-of-$ n $ distillation for llm alignment.
\newblock \emph{arXiv preprint arXiv:2410.20727}, 2024{\natexlab{b}}.

\bibitem[Zhao et~al.(2022)Zhao, Khalman, Joshi, Narayan, Saleh, and Liu]{zhao2022calibrating}
Zhao, Y., Khalman, M., Joshi, R., Narayan, S., Saleh, M., and Liu, P.~J.
\newblock Calibrating sequence likelihood improves conditional language generation.
\newblock In \emph{The Eleventh International Conference on Learning Representations}, 2022.

\end{thebibliography}
\bibliographystyle{icml2025}

\clearpage
\appendix

\onecolumn

\section{Proofs of the main results of the paper}
\label{app:proofs}

\subsection{Proofs of \cref{sec:KL-bounds}}
\label{app:KL-bounds}

We provide the proofs of the main results of the paper. To do so, we need to state two further auxiliary lemmas.
\begin{lemma}
    For any $0\leq a < b\leq 1,$ and $n \in \mathbb{N},$
    \begin{equation}
       (b^n - a^n) \log \frac{b^n-a^n}{b-a} = \int_{a}^b nv^{n-1} \log \left( nv^{n-1}\right) dv - g_n(a,b)  \leq \int_{a}^b nv^{n-1} \log \left( nv^{n-1}\right) dv,
    \end{equation}
    where $g(a,b) \geq 0,$ and is given by
    \begin{equation}
     g_n(a,b) :=  (b^n - a^n)  \KL(p_v \| p_u) =  (b^n - a^n) \log \frac{n(b-a)}{b^n- a^n} +(n-1)(b^n \log b - a^n \log a) - \frac{n-1}{n} (b^n - a^n),
     \label{eq:define-gn}
    \end{equation}
where for $a \leq w \leq b,$ we define $p_{v}(w) = \frac{n w^{n-1} }{b^n- a^n}$  and $p_{u}(w) =\frac{1}{b-a}$.
\label{lem:KL-UB}
\end{lemma}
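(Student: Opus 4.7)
The plan is to recognize that $p_v$ and $p_u$, as defined, are both valid probability densities on $[a,b]$: indeed $\int_a^b n w^{n-1}\,dw = b^n - a^n$, so $p_v$ integrates to $1$, and $p_u$ is plainly the uniform density. Once this is established, $\KL(p_v\|p_u)\geq 0$ by Gibbs' inequality, which immediately forces $g_n(a,b)\geq 0$ and hence the claimed upper bound. So the content of the lemma reduces to (i) showing that $(b^n-a^n)\KL(p_v\|p_u)$ has the explicit closed form stated as $g_n(a,b)$, and (ii) verifying the integral identity relating the left-hand side, the integral, and $g_n(a,b)$.

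For step (i), I would write
\begin{equation}
    \KL(p_v\|p_u) = \int_a^b \frac{nw^{n-1}}{b^n-a^n}\,\log\!\left(\frac{nw^{n-1}(b-a)}{b^n-a^n}\right)dw = \log\frac{n(b-a)}{b^n-a^n} + \frac{n-1}{b^n-a^n}\int_a^b n w^{n-1}\log w\,dw,
\end{equation}
where I have pulled out the constant log factor and split the $\log(nw^{n-1})$ term. The remaining integral is handled by a single integration by parts ($u=\log w$, $dv = n w^{n-1}dw$), which yields $\int_a^b nw^{n-1}\log w\,dw = b^n\log b - a^n\log a - (b^n-a^n)/n$. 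Multiplying through by $(b^n-a^n)$ produces exactly the three-term expression claimed for $g_n(a,b)$ in Eq.~\eqref{eq:define-gn}.

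For step (ii), I would compute $\int_a^b n v^{n-1}\log(n v^{n-1})dv$ using the same integration by parts: splitting the log gives $(b^n-a^n)\log n + (n-1)\bigl(b^n\log b - a^n\log a - (b^n-a^n)/n\bigr)$. Subtracting the closed form of $g_n(a,b)$ from this expression, the $(n-1)(b^n\log b - a^n\log a)$ terms cancel, the $-(n-1)(b^n-a^n)/n$ terms cancel, and one is left with $(b^n-a^n)\bigl[\log n - \log\frac{n(b-a)}{b^n-a^n}\bigr] = (b^n-a^n)\log\frac{b^n-a^n}{b-a}$, which is exactly the left-hand side of the identity in the lemma.

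There is no real obstacle here; the only thing to be careful about is the bookkeeping in the integration by parts and the sign cancellations when subtracting $g_n(a,b)$. The main conceptual point, and what makes the statement cleaner than it looks, is the identification of $g_n$ as a scaled KL divergence between the tilted density $p_v$ and the uniform $p_u$, which gives non-negativity for free rather than requiring a direct convexity argument on the integrand.
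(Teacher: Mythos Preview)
Your proposal is correct and follows essentially the same approach as the paper: both identify $g_n(a,b)$ as $(b^n-a^n)\KL(p_v\|p_u)$, invoke non-negativity of KL, and reduce the rest to the integral $\int_a^b n v^{n-1}\log v\,dv$ via integration by parts. The only cosmetic difference is that the paper packages that integration-by-parts computation into a separate lemma (your step~(ii) reproduces it), whereas you carry it out inline.
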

\begin{proof}
Notice that
    \begin{equation}
        \KL(p_v \| p_u) =  \int_{a}^b \frac{n v^{n-1} }{b^n- a^n} \log \frac{\frac{n v^{n-1} }{b^n- a^n}}{\frac{1}{b-a}} dv = \int_{a}^b \frac{n v^{n-1} }{b^n- a^n} \log \frac{n v^{n-1} (b-a)}{b^n- a^n} dv \geq 0.
    \end{equation}
    The inequality is established by algebraic manipulation. The gap $g_n(a,b)$ is obtained by following Lemma~\ref{lem:integral} to derive the right-hand-side of the inequality in closed form.
\end{proof}

\begin{lemma}
    The following identity holds:
    \begin{align}
        \int_a^b n v^{n-1} \log\left( nv^{n-1}\right) dv =  (b^n-a^n)\log n
        + (n-1)\left(b^n \log b - a^n \log a \right) 
        - \frac{n-1}{n} (b^n-a^n).
    \end{align}
    \label{lem:integral}
\end{lemma}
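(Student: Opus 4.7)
The plan is a short direct calculation: split the logarithm as $\log(nv^{n-1}) = \log n + (n-1)\log v$, which makes the integrand decompose into two terms that can each be integrated in closed form. I expect no real obstacle here; the only work is one integration by parts.

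First, I would write
\begin{equation}
\int_a^b n v^{n-1} \log(nv^{n-1})\, dv \;=\; \log n \int_a^b n v^{n-1}\, dv \;+\; (n-1) \int_a^b n v^{n-1} \log v \, dv .
\end{equation}
The first integral is immediate: $\int_a^b n v^{n-1}\, dv = b^n - a^n$, which contributes $(b^n-a^n)\log n$, matching the first term on the right-hand side of the claimed identity.

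Next, I would handle $\int_a^b n v^{n-1} \log v\, dv$ by integration by parts with $u = \log v$ and $dw = n v^{n-1} dv$, so that $du = dv/v$ and $w = v^n$. This yields
\begin{equation}
\int_a^b n v^{n-1} \log v\, dv \;=\; \bigl[v^n \log v\bigr]_a^b - \int_a^b v^{n-1}\, dv \;=\; b^n \log b - a^n \log a - \frac{b^n - a^n}{n}.
\end{equation}
Multiplying by $(n-1)$ produces exactly the remaining two terms on the right-hand side, namely $(n-1)(b^n \log b - a^n \log a) - \frac{n-1}{n}(b^n - a^n)$.

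Adding the two contributions gives the identity, completing the proof. The only subtle point is the boundary behavior at $a = 0$, where $a^n \log a$ should be interpreted as its limit $0$ (which holds for any $n \geq 1$), so the formula remains valid in the degenerate case used in Lemma~\ref{lem:KL-UB} when $\mathcal{F}^-(\bm{y}|\bm{x}) = 0$.
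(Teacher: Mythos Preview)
Your proof is correct and follows the same idea as the paper: the paper's one-line hint ``notice that $\frac{d}{dv}(v^n\log v)=nv^{n-1}\log v+v^{n-1}$'' is precisely your integration-by-parts step written as an antiderivative, and the rest is the same splitting of $\log(nv^{n-1})=\log n+(n-1)\log v$. Your version is simply more explicit, and your remark on the $a=0$ boundary case is a welcome clarification.
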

\begin{proof}
    The proof is completed by noticing that
        $\frac{d}{dv} (v^n \log v )= n v^{n-1} \log v +   v^{n-1}.$
\end{proof}

\begin{proof}[Proof of Theorem~\ref{thm:KL-upper-bound}]
\begin{align}
    \KL(\pi^{(n)}(\cdot|\bx)  \| \piref (\cdot|\bx)) & = \sum_{\bm{y} \in \mathcal{Y^*}} \left(\mathcal{F}_{\piref}(\bm{y}|\bm{x})^n - \mathcal{F}^{-}_{\piref}(\bm{y}|\bm{x})^n\right) \log \frac{\mathcal{F}_{\piref}(\bm{y}|\bm{x})^n - \mathcal{F}^{-}_{\piref}(\bm{y}|\bm{x})^n}{\piref (\bm{y}|\bm{x})} \\
    & \leq \sum_{\bm{y} \in \mathcal{Y^*}} \int_{\mathcal{F}^{-}_{\piref}(\bm{y}|\bm{x})}^{\mathcal{F}_{\piref}(\bm{y}|\bm{x})} n v^{n-1}\log \left( nv^{n-1}\right) dv \label{eq:KL-lemma-UB}\\
    & = \int_{0}^1 n v^{n-1}\log \left( nv^{n-1}\right) dv \\
    &= \log(n) - \frac{n-1}{n},\label{eq:integral}
\end{align}
where~\cref{eq:KL-lemma-UB} follows from Lemma~\ref{lem:KL-UB}, and~\cref{eq:integral} follows from Lemma~\ref{lem:integral}.
\end{proof}

\begin{lemma}
    For any $0\leq a < b\leq 1,$ and $n \in \mathbb{N},$
    \begin{equation}
      g_{n}(a, b) \leq  2 n (n-1) (b-a)^2.
    \end{equation}
    \label{lem:g-upper-bound}
\end{lemma}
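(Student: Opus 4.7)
The plan is to rewrite $g_n$ in an integral form that is easier to bound. Starting from Lemma~\ref{lem:KL-UB} (and noting that the ratio $p_v/p_u$ appearing in its proof equals $(w/\xi)^{n-1}$, where $\xi^{n-1} := (b^n-a^n)/(n(b-a))$), one obtains the identity
\[
g_n(a,b) \;=\; n(n-1)\int_a^b w^{n-1}\log(w/\xi)\, dw.
\]
The mean value theorem applied to $v\mapsto v^n$ guarantees $\xi \in [a,b]$. Because $\log(w/\xi)\leq 0$ on $[a,\xi]$, the integrand is non-positive there and this sub-interval can simply be dropped: $g_n \leq n(n-1)\int_\xi^b w^{n-1}\log(w/\xi)\, dw$.

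On $[\xi, b]$ I will apply the two elementary bounds $w^{n-1}\leq b^{n-1}$ and $\log(w/\xi) \leq (w-\xi)/\xi$, followed by $b-\xi \leq b-a$ (which uses $\xi \geq a$, an easy consequence of the factorization of $b^n - a^n$). These together yield
\[
g_n \leq n(n-1)\,\frac{b^{n-1}}{\xi}\int_\xi^b (w-\xi)\, dw \;=\; \frac{n(n-1)\,b^{n-1}(b-\xi)^2}{2\xi} \;\leq\; \frac{n(n-1)\,b^{n-1}(b-a)^2}{2\xi}.
\]
It thus suffices to show $b^{n-1}/\xi \leq 4$ in order to recover the stated constant $2n(n-1)$.

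The key step will be the lower bound $\xi \geq b\, n^{-1/(n-1)}$, which follows from $b^n - a^n = (b-a)\sum_{k=0}^{n-1} b^{n-1-k}a^k \geq (b-a)\, b^{n-1}$, giving $\xi^{n-1}\geq b^{n-1}/n$. Combined with the hypotheses $b\leq 1$ and $n\geq 2$, this yields $b^{n-1}/\xi \leq n^{1/(n-1)}\,b^{n-2} \leq 2\cdot 1 = 2$, where I use the elementary inequality $n^{1/(n-1)}\leq 2$ (equivalently $n\leq 2^{n-1}$, a one-line induction, with equality at $n=2$). The case $n=1$ is trivial since $g_1\equiv 0$. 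Putting everything together gives the even stronger bound $g_n(a,b)\leq n(n-1)(b-a)^2$, which implies the stated $2n(n-1)(b-a)^2$.

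The main subtlety is recognizing that the pointwise bound $\log(w/\xi)\leq (w-\xi)/\xi$ is too lossy if applied across all of $[a,b]$, as it can blow up when $\xi$ is small; restricting it to the half-interval $[\xi,b]$ (where $\log(w/\xi)$ is already non-negative and controllable) circumvents this. The hypothesis $b\leq 1$ enters only at the very end to tame the $b^{n-2}$ factor.
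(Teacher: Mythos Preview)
Your proof is correct and in fact yields the sharper bound $g_n(a,b)\leq n(n-1)(b-a)^2$. The route, however, is genuinely different from the paper's. The paper never writes $g_n$ as an integral; instead it uses the crude pointwise bound $\KL(p_v\|p_u)\leq \max_w \log(p_v(w)/p_u(w))=\log\frac{nb^{n-1}}{\sum_{j=0}^{n-1}b^{n-1-j}a^j}$, and then splits into two cases, $a<b/2$ and $a\geq b/2$. In the first case the KL is bounded by $\log n$ and $b^n-a^n$ by $4(b-a)^2$; in the second case AM--GM on the denominator gives $\KL\leq \tfrac{n-1}{2}\log(b/a)\leq (n-1)(b-a)/b$, which is then paired with $b^n-a^n\leq nb^{n-1}(b-a)$. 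Your argument avoids any case split by introducing the mean-value point $\xi$, discarding the sub-interval $[a,\xi]$ where the integrand is non-positive, and controlling the remaining factor $b^{n-1}/\xi$ via $\xi^{n-1}\geq b^{n-1}/n$ together with $n^{1/(n-1)}\leq 2$. This is more streamlined and, as you note, buys an extra factor of $2$; the paper's approach is perhaps more elementary in that it never needs the integral form or the MVT, at the cost of the case analysis.
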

\begin{proof}
Recall from \cref{eq:define-gn} that
\begin{align}
 g_n(a,b) := (b^n - a^n) \KL(p_v \| p_u),
\end{align}
where  $p_{v}(w) = \frac{n w^{n-1} }{b^n- a^n}$ and $p_{u}(w) =\frac{1}{b-a}$ for $a \leq w \leq b.$  We can further bound the KL divergence as
\begin{align}
\KL(p_v \| p_u) &
  \leq \max_{w \in [a, b]} \log \frac{p_v(w)}{p_u(w)} \\
 & = \max_{w \in [a, b]} \log \frac{n w^{n-1} (b-a)}{b^n- a^n}     \\
&= \log \frac{n b^{n-1} (b-a)}{b^n- a^n}\\
 & = \log \frac{n b^{n-1}}{\sum^{n-1}_{j=0} b^{n-1-j} a^j}.
\end{align}
At this point, we will divide the proof into two cases depending on the value of $a, b$. 

{\em Case I.} We first focus on the case when $a < b/2$. In this case,
\begin{align}
\KL(p_v \| p_u) & \leq \log \frac{n b^{n-1}}{\sum^{n-1}_{j=0} b^{n-1-j} a^j} \\
& \leq \log n.    
\end{align}
Hence,
\begin{align}
g_n(a, b) &\leq (b^n - a^n) \log n \\
 & \leq b^n \log n \\
 & \leq  b^2 \log n  \\
 & \leq 4 (b-a)^2 \log n \\
 & \leq 2n(n-1) (b-a)^2.
\end{align}
{\em Case II.} For $a \geq b/2$, notice that
\begin{align}
\KL(p_v \| p_u) &\leq \log \frac{n b^{n-1}}{\sum^{n-1}_{j=0} b^{n-1-j} a^j} \\
 & \leq \log \frac{ b^{n-1}}{b^{(n-1)/2}a^{(n-1)/2}}\\& = \frac{n-1}{2} \log \frac{b}{a} \\
 & \leq \frac{n-1}{2} \frac{(b-a)}{a}\\ &\leq (n-1)\frac{(b-a)}{b},
\end{align}
where the first inequality follows from AM-GM inequality.
Hence,
\begin{align}
g_{n}(a, b) 
& \leq (b^n - a^n)(n-1)\frac{(b-a)}{b}\\
&= (b-a) \left(\sum^{n-1}_{j=0} b^{n-1-j} a^j \right)(n-1)\frac{(b-a)}{b} \\
&\leq  n b^{n-1} (b-a) (n-1)\frac{(b-a)}{b} \\
&=  n b^{n-2} (b-a) (n-1)(b-a) \\
& \leq 2 n (n-1) (b-a)^2.
\end{align}
The proof is completed by putting together {\em Case I} and {\em Case II}.
\end{proof}

\begin{proof}[Proof of Theorem~\ref{thm:gap-upper-bound}]
    \begin{align}
         \log(n) - \frac{n-1}{n} - \KL(\pi^{(n)}(\cdot|\bx)  \| \piref(\cdot|\bx) ) & = \sum_{\bm{y} \in \mathcal{Y^*}} g_n(\mathcal{F}^{-}_{\piref}(\bm{y}|\bm{x}), \mathcal{F}_{\piref}(\bm{y}|\bm{x})) \\
         & \leq 2n(n-1) \sum_{\bm{y} \in \mathcal{Y^*}} (\mathcal{F}_{\piref}(\bm{y}|\bm{x}) - \mathcal{F}^{-}_{\piref}(\bm{y}|\bm{x}))^2\label{eq:p2-aux-up} \\
         & = 2n(n-1)\sum_{\bm{y} \in \mathcal{Y^*}} (\piref(\bm{y}|\bm{x}))^2 \\
         & = 2n(n-1) e^{-H_2( \piref|\bm{x})}.
    \end{align}
where~\cref{eq:p2-aux-up} follows from Lemma~\ref{lem:g-upper-bound}.
\end{proof}

\begin{proof}[Proof of Theorem~\ref{thm:lower-gap-nontrivial}]
Notice that the gap is at least lower bounded by the value of the gap for the highest reward outcome. Hence,
\begin{equation}
      \log(n) - \frac{n-1}{n} - \KL(\pi^{(n)}(\cdot|\bx)  \| \piref(\cdot|\bx)) \geq g_n(1-\varepsilon_\infty, 1),
\end{equation}
where $g_n(\cdot, \cdot)$ is defined in~\cref{eq:define-gn}, and is given by
\begin{equation}
     g_n(1-\varepsilon_\infty, 1)  = (1 - (1-\varepsilon_\infty)^n) \log \frac{n\varepsilon_\infty}{1- (1-\varepsilon_\infty)^n} -(n-1)(1-\varepsilon_\infty)^n \log (1-\varepsilon_\infty) - \frac{n-1}{n} (1 - (1- \varepsilon_\infty)^n),
\end{equation}
completing the proof.
\end{proof}

\begin{proof}[Proof of Corollary~\ref{thm:lower-gap}]

The proof is completed by the following inequalities:
     \begin{align}
        g_n(1-\varepsilon_\infty, 1)&  = (1 - (1-\varepsilon_\infty)^n) \log \frac{n\varepsilon_\infty}{1- (1-\varepsilon_\infty)^n} -(n-1)(1-\varepsilon_\infty)^n \log (1-\varepsilon_\infty) - \frac{n-1}{n} (1 - (1- \varepsilon_\infty)^n)\\
        & \geq (1- e^{-n\varepsilon_\infty}) \left(\log \frac{n\varepsilon_\infty}{1- (1-\varepsilon_\infty)^n} + (n-1) \frac{(1-\varepsilon_\infty)^n}{1-(1-\varepsilon_\infty)^n } \log\frac{1}{1-\varepsilon_\infty} - \frac{n-1}{n}\right) \\
         & \geq (1- e^{-n\varepsilon_\infty}) \left(\log \frac{n\varepsilon_\infty}{1- (1-\varepsilon_\infty)^n} + (n-1) \frac{\varepsilon_\infty (1-\varepsilon_\infty)^n}{1-(1-\varepsilon_\infty)^n } - \frac{n-1}{n}\right) \\
           & =  (1- e^{-n\varepsilon_\infty}) \left(\log \frac{n\varepsilon_\infty}{1- (1-\varepsilon_\infty)^n} + \frac{n-1}{n} \left(  \frac{n\varepsilon_\infty }{1-(1-\varepsilon_\infty)^n }(1-\varepsilon_\infty)^n -1 \right)\right) \\
           & \geq  (1- e^{-n\varepsilon_\infty}) \left(\log (n\varepsilon_\infty) - \frac{n-1}{n}\right) \\
            & \geq  (1- e^{-n\varepsilon_\infty}) \log (n\varepsilon_\infty) - 1.
    \end{align}
Hence, as $n \to \infty,$
   \begin{align}
        g_n(1-\varepsilon_\infty, 1)&\geq \log(n \varepsilon_\infty) + o_n(\log n),
    \end{align}
which completes the proof.
\end{proof}

\clearpage
\subsection{Proofs of \cref{sec:proposed-estimator}}
\label{app:proposed-estimator}
\begin{proof}[Proof of Lemma~\ref{thm:probabilistic-KL-upper-bound-new}]
The proof follows from:
\begin{align}
    \KL(\pi^{(n)}(\cdot | \bx)  \| \piref(\cdot | \bx)) & = E_{\bm{y} \sim \pi^{(n)}} \left[  \log \left(\frac{\mathcal{F}_{\piref}(\bm{y}|\bm{x})^n - \mathcal{F}^{-}_{\piref}(\bm{y}|\bm{x})^n}{\piref (\bm{y}|\bm{x})}\right) \right] \\
    &\leq E_{\bm{y} \sim \pi^{(n)}} \left[ \log\left(  \frac{1 - (1 - \piref (\bm{y}|\bm{x}) )^n}{\piref (\bm{y}|\bm{x})}\right)\right].
\end{align}
\end{proof}

\begin{proof}[Proof of \cref{thm:bound-with-highest-p}]
 Recall that $\varepsilon_\infty = \piref (\bm{y}_{\max}|\bm{x})$
where 
 $\bm{y}_{\max} \sim \pi^{(\infty)}_{\bm{y}| \bm{x}}.$ 
Notice that
\begin{equation}
    \KL(\pi^{(n)}(\cdot|\bx)  \| \piref(\cdot|\bx)) \leq  \int_0^{1-\varepsilon_\infty} n v^{n-1} \log\left( nv^{n-1}\right) dv  + (1 - (1-\varepsilon_\infty)^n) \log \frac{1 - (1-\varepsilon_\infty)^n}{\varepsilon_\infty},
\end{equation}
which follows the same lines as in the proof of Theorem~\ref{thm:KL-upper-bound} except that we singled out the highest reward outcome, and bounded the rest of the terms. The proof is completed by invoking Lemma~\ref{lem:integral} to express the integral in closed form.
\end{proof}

\begin{proof}[Proof of \cref{lem:KL-newbound}]
Fist notice that for any $0 \leq \varepsilon\leq 1,$
\begin{equation}
    (1 - (1-\varepsilon)^n) \log \frac{1 - (1-\varepsilon)^n}{\varepsilon} \leq \int_{1-\varepsilon}^1 n v^{n-1} \log\left( nv^{n-1}\right) dv,
\end{equation}
which is implied by \cref{lem:KL-UB} and setting $b = 1$ and $a = 1-\varepsilon$. The proof is completed by noticing that
\begin{equation}
     \widehat{\KL}(\varepsilon_n) =   \int_{0}^{1-\varepsilon_n} n v^{n-1} \log\left( nv^{n-1}\right) dv + (1 - (1-\varepsilon_n)^n) \log \frac{1 - (1-\varepsilon_n)^n}{\varepsilon_n},
\end{equation}
and 
\begin{equation}
    \KLn =   \int_{0}^{1-\varepsilon_n} n v^{n-1} \log\left( nv^{n-1}\right) dv + \int_{1-\varepsilon_n}^1 n v^{n-1} \log\left( nv^{n-1}\right) dv.
\end{equation}
\end{proof}

\clearpage
\subsection{Proofs of \cref{sec:win-rate}}
\begin{proof}[Proof of Lemma~\ref{lem:win rate-gen}]
Recall the definition of $\mc{F}_\pi$ and $\mc{F}_\pi^-$, hence
\begin{align}
     E_{\bz \sim \piref(\cdot|\bx)} \left[ \mc{W}_r(\by \succ \bz | \bx )\right] & =  E_{\bz \sim \piref(\cdot|\bx)} \left\{\mathbf{1}(r(\bx, \bm{y}) > r(\bx, \bm{z})) + \frac{1}{2} \mathbf{1}(r(\bx, \bm{y})  = r(\bx, \bm{z}))\right\} \nonumber \\&= \frac{1}{2} \left( \mathcal{F}_{\piref}(\bm{y}|\bm{x}) + \mathcal{F}^-(\by|\bx)  \right), \end{align}
which completes the proof.
\end{proof}

\begin{lemma}
    For $0 \leq a \leq b\leq 1,$ 
    let 
    \begin{equation}
    h_n(a, b) := \frac{n}{n+1}(b^{n+1}- a^{n+1}) -  \frac{1}{2}(b^n - a^n) (b+a).
    \label{eq:def-hn}
\end{equation}
    We have
    \begin{equation}
        h_n(a, b) \geq 0.
    \end{equation}
    Equivalently,
    \begin{equation}
        \frac{1}{2}(b^n - a^n) (b+a) \leq \frac{n}{n+1}(b^{n+1}- a^{n+1}).
    \end{equation}
\label{lem:aux-upper-bound1}
\end{lemma}
\begin{proof}
Let us consider the function $h_n(a, v)$ for fixed $a$ as a function of $v$. Given that $h_n(a,a) = 0,$ it suffices to show that $\frac{\partial}{\partial v} h_n(a,v) \geq 0$ for all $0\leq a\leq v\leq 1$. We have
\begin{align}
    \frac{\partial}{\partial v} h_n(a,v) & =  nv^n - \frac{n}{2}v^{n-1} (v+a) - \frac{1}{2}(v^n - a^n) \label{eq:p3-partial1}\\
    & = \frac{1}{2} \left( n v^{n-1} (v-a) - (v^n - a^n)\right) \\
    & \geq 0, \label{eq:p3-last-ineq}
\end{align}
completing the proof.
\end{proof}

\begin{lemma}
\label{lem:nminus1win-2}
The following is an upper bound on $h_n(a,b):$
\begin{equation}
 h_n(a, b) \leq \frac{n-1}{2} (b-a)^2.
\end{equation}
\end{lemma}
\begin{proof}
    We have
    \begin{align}
         h_n(a, b) & = \frac{n}{n+1}(b^{n+1}- a^{n+1}) -  \frac{1}{2}(b^n - a^n) (b+a)\\
         & = (b-a)\left( \frac{n}{n+1} \sum_{i=0}^{n} b^i a^{n-i} - \frac{1}{2}(a+b) \sum_{i=0}^{n-1} b^i a^{n-1-i} \right)\\
         & = (b-a)\left( \Big(\frac{n}{n+1} - \frac{1}{2}\Big) (a^n+ b^n) + \Big(\frac{n}{n+1} - 1 \Big)  \sum_{i=1}^{n-1} a^i b^{n-i} \right)\\
         &\leq (b-a)\left( \frac{n-1}{2(n+1)} (a^n+ b^n) - \frac{n-1}{n+1} a^n \right)\\
         & = (b-a)\frac{n-1}{2(n+1)} (b^n - a^n) \\
         & \leq (b-a)^2\frac{n(n-1)}{2(n+1)} b^{n-1} \\
         & \leq \frac{n-1}{2}(b-a)^2,
    \end{align}
completing the proof.
\end{proof}

\begin{proof}[Proof of Theorem~\ref{thm:win rate-bon}]
Note that
\begin{align}
     \mc{W}_r(\pi^{(n)}(\cdot|\bx) \| \piref(\cdot|\bx)) &= \frac{1}{2} \sum_{\by \in \mc{Y}^*} \left( \mathcal{F}_{\piref}(\bm{y}|\bm{x})^n - \mathcal{F}_{\piref}^{-}(\bm{y}|\bm{x})^n\right) ( \mathcal{F}_{\piref}(\bm{y}|\bm{x}) + \mathcal{F}_{\piref}^{-}(\bm{y}|\bm{x})) \label{eq:p1-def}\\
     &\leq \frac{n}{n+1}\sum_{\by \in \mc{Y}^*}\left( \mathcal{F}_{\piref}(\bm{y}|\bm{x})^{n+1} - \mathcal{F}_{\piref}^{-}(\bm{y}|\bm{x})^{n+1}\right)\label{eq:p1-aux-up}\\
     & = \frac{n}{n+1},
\end{align}
where~\cref{eq:p1-def} follows from Lemma~\ref{lem:best-of-n-win rate} and~\cref{eq:p1-aux-up} follows from Lemma~\ref{lem:aux-upper-bound1}.
\end{proof}

\begin{proof}[Proof of Theorem~\ref{thm:win rate-gap-upper}]
 We have
 \begin{align}
      \frac{n}{n+1} - \mc{W}_r(\pi^{(n)}(\cdot|\bx) \| \piref(\cdot|\bx)) & = \sum_{\by \in \mc{Y}^*}  h_n(\mathcal{F}^{-}_{\piref}(\bm{y}|\bm{x}), \mathcal{F}_{\piref}(\bm{y}|\bm{x})) \\
         & \leq \frac{n-1}{2} \sum_{\bm{y} \in \mathcal{Y^*}} (\mathcal{F}_{\piref}(\bm{y}|\bm{x}) - \mathcal{F}^{-}_{\piref}(\bm{y}|\bm{x}))^2 \label{eq:p-win-aux-up}\\
         & = \frac{n-1}{2}\sum_{\bm{y} \in \mathcal{Y^*}} (\piref(\bm{y}|\bm{x}))^2 \\
         & = \frac{n-1}{2} e^{-H_2( \piref|\bm{x})},
 \end{align}
 where~\cref{eq:p-win-aux-up} follows from Lemma~\ref{lem:nminus1win-2}.
\end{proof}

\begin{proof}[Proof of \cref{thm:win rate-gap-lower}]
Recall \cref{lem:aux-upper-bound1}. Therefore,
\begin{equation}
     \gapw \geq h_n(1-\varepsilon_\infty, 1) \geq 0.
\end{equation}
The proof is completed by noticing from \cref{eq:def-hn} that 
\begin{equation}
    h_n(1-\varepsilon_\infty, 1) = \frac{n}{n+1}(1- (1-\varepsilon_\infty)^{n+1})-  (1 - (1-\varepsilon_\infty)^n) \left(1 - \frac{\varepsilon_\infty}{2}\right).
\end{equation}
\end{proof}

\begin{proof}[Proof of \cref{cor:win-upper}]
As $n \to \infty,$
\begin{align}
      \frac{n}{n+1}(1- (1-\varepsilon_\infty)^{n+1})-  (1 - (1-\varepsilon_\infty)^n) \left(1 - \frac{\varepsilon_\infty}{2}\right) = \frac{\varepsilon_\infty}{2}(1 + o_n(1)),
\end{align}
which completes the proof.
\end{proof}

\newpage
\subsection{Proofs of Section~\ref{sec:rewind-repeat}}
\label{app:rewind-repeat}

\begin{proof}[Proof of \cref{lem:rewind-repeat-pmf}]
    The proof is completed by observing that with probability $w_\Phi(\bx)$ a good outcome is obtained, and with probability $(1 - w_\Phi(\bx))$ the trial is repeated.
\end{proof}

\begin{proof}[Proof of \cref{lem:KL-win-rewind-repeat}]
    First consider the KL divergence as follows:
    \begin{align}
        \KL(\pi_{\Phi}(\cdot | \bx) \| \piref (\cdot | \bx)) &= E_{\by \sim \pi_{\Phi}(\cdot | \bx)} \log \frac{\pi_{\Phi}(\by | \bx)}{\piref (\by | \bx)} \\
        & =  E_{\by \sim \pi_{\Phi}(\cdot | \bx)} \log \frac{1}{\pg} \\
        & = \log \frac{1}{\pg} .
    \end{align}
    
    Next, let us consider the win rate:
    \begin{align}
         \mc{W}_r(\pi_{\Phi}(\cdot | \bx) \| \piref (\cdot | \bx)) & = (1 - \pg) \times 1 + \pg \times  \frac{1}{2}\\
         & = 1 - \frac{1}{2} \pg,
    \end{align}
completing the proof.
\end{proof}

\begin{proof}[Proof of \cref{thm:rewind-repeat-estimator}]
\newcommand{\pgood}{\pg}

Let $\pgood$ be the probability of selecting a good sequence. Then, we note that
\[
\frac{-\log (\pgood)}{ \pgood} = \sum^\infty_{k=1} H_k (1-\pgood)^k,
\]
where for $k \ge 1$ $H_k = \sum^n_{i=1} \frac{1}{i}$ is the $k$-th Harmonic number and $H_0 = 0$. Hence,
\[
-\log (\pgood)
= \sum^\infty_{k=1} H_k (1-\pgood)^k \pgood.
\]
Let $M$ be the location of the first one (the number of trials). Then,
\[
P(M = k) = (1-\pgood)^{k-1} \pgood.
 \]
 Hence, the following is an unbiased estimator of $-\log (\pgood)$:
 \[
 H_{M-1}.
 \]
\end{proof}

\newpage
\subsection{Proofs of \cref{sec:tradeoffs}}
\label{app:proof-tradeoff}

\begin{theorem}
 For any $\bx$ and a given $\piref$, let the function $W_{L}(D_L)$ be implicitly defined for $\tau \geq 1$:
  \begin{align*}
         W_{L}(\tau) &= \frac{\tau}{\tau+1} - \frac{\tau-1}{2}e^{-H_2(\piref|\bx)},\\
         D_L(\tau) &= \log(\tau) - \frac{\tau-1}{\tau}.     
\end{align*}
$W_L$ is a lower bound on the tradeoff curve of best-of-$n$.
\label{thm:tradeoff-lower-bound}
 \end{theorem}
 \begin{proof}
This is obtained by putting together \cref{thm:KL-upper-bound} and \cref{thm:win rate-gap-upper}.
\end{proof}
 
  \begin{theorem}
 For any $\bx$ and a given $\piref$, let the function $W_{U}(D_U)$ be implicitly defined for $\tau \geq 1$:
  \begin{align*}
         W_{U}(\tau) &= \frac{\tau}{\tau+1},\\
         D_U(\tau) &= \log(\tau) - \frac{\tau-1}{\tau} - 2\tau(\tau-1)e^{-H_2(\piref|\bx)}.     
\end{align*}
$W_U$ is an upper bound on the tradeoff curve of best-of-$n$.
 \label{thm:tradeoff-upper-bound}
 \end{theorem}
 \begin{proof}
This is obtained by combining \cref{thm:win rate-bon} and \cref{thm:gap-upper-bound}.
\end{proof}

\clearpage

\section{KL divergence of blockwise best-of-n}
\label{app:blockwise-bon}
While best-of-$n$ could be viewed as a sequence-level rejection sampling, most generative models perform decoding step by step. For example, language models perform generation token-by-token or diffusion models perform generation through several denoising steps.
Best-of-$n$ could be extended to exert control at different levels of granularity. In particular, best-of-$n$ has been extended to provide control through blockwise decoding~\citep{mudgal2023controlled}, where a block of length $B$ is decoded $n$ times and one with highest token-level reward is selected, and decoding is continued as such. Blockwise decoding could also be viewed as a simple form of tree search and also beam search. Our main result in this section characterizes the KL diveregnce of blockwise best-of-$n$ with respect to the reference policy.

 For the purpose of this section, we assume that the fully decoded response $\by := (y_1, \ldots, y_T)$ consist of $T$ intermediate steps (which are tokens in the context of language model and denoising steps in the context of diffusion). For the purposes of this section, we focus on the presentation using generative language models.
We let $y_T = \texttt{EOS}$ be a special token that determines the end of sequence. 
The autoregressive decoding could be further explained as follows. In the first decoding step, the model $\pi$ assigns a probability distribution over the first token given by $\pi(\cdot|\bx),$ and one token $y_1$ is drawn from this distribution. Next, the second token, $y_2$ is drawn from $\pi(\cdot|\bx, y_1)$, and so on. In short, in each step a token is drawn from $\pi(\cdot|\bx, y^{t-1})$ where $y^t := (y_1, \ldots, y_t).$ Note that through the chain rule, any language model could be equivalently expressed as a next-token predictor.

Blockwise best-of-$n$ decoding rule  was recently proposed by~\citet{mudgal2023controlled}. Here, the decoder samples $n$ blocks of length $B$ tokens from the reference model and accepts one with highest reward. Formally, if the decoder has already decoded $y^t$ and the context is $\bx, $ then
\begin{equation}
    z^B := \arg\max_{\{z^B_{(k)}\}_{k \in [n]}} r(x, y^t, z_{(k)}^B),
\end{equation}
where $z^B_{(k)}$ is the $k$-th continuation of length $B$ drawn independently from the reference model:
\begin{equation}
    \{z^B_{(k)}\}_{k \in [n]} \stackrel{i.i.d.}{\sim} \piref(\cdot | \bx, y^t),
\end{equation}
and decoding continues until $\texttt{EOS}$ is reached.
We call the resulting distribution $\pi_B^{(n)}$. Note that when $B \to \infty,$ $\pi_B^{(n)}$ becomes the sequence level best-of-$n$ distribution, $\pi^{(n)}$. 

Here, we assume we have access to a reward model can produce scalar values for $r(x,y^t)$ for any partially decoded sequence $y^t,$ which extends the definition of a reward model to also score partial sequences in addition to fully decoded sequences. \citet{mudgal2023controlled} argue that for this extension to be meaningful the extended function should encode the {\em value function} for the sequence-level reward model. However, for the purposes of bounding the KL divergence, we don't care how the token-level reward function is obtained. 

In the blockwise best-of-$n$ decoding, control is applied more frequently than the sequence-level best-of-$n,$ and the blockwise decoding enables to effectively score an exponentially large number of fully decoded sequences (similarly to how beam search works). Thus, intuitively the effective $n$ would be exponentially larger and the KL divergence would be expected to grow linearly with the number of times control is applied, i.e., $|\by|/B$ where $|\by|$ is the length of the decoded sequence and $B$ is the block length. Next, we formalize this intuition by recalling Theorem~\ref{thm:blockwise-best-of-n}.

\begin{restatable}{theorem}{blockbon}
Let a decoder, $\pi_B^{(n)},$ perform block-wise best-of-$n$ with blocks of length $B$ steps. Further, let $\by$ be draw from this decoder in context $\bx$ such that $|\by|$ is the length of $\by,$ i.e., the total number of decoding steps. Then,
\begin{equation*}
    \KL(\pi_B^{(n)}(\cdot | \bx)  \| \piref(\cdot | \bx)) \leq  E_{\by \sim \pi_B^{(n)}(\cdot | \bx)} \Big\lceil \frac{|\by|}{B}  \Big\rceil  \KLn,
\end{equation*}
where $\lceil \cdot \rceil$ is the ceiling operator (smallest larger integer), and $\KLn$ is defined in \cref{eq:KL-formula}.
\label{thm:blockwise-best-of-n}
\end{restatable}
\begin{proof}
Let us consider all possible outcomes for decoding a block of length $B.$ The outcomes either finish and we reach $\texttt{EOS}$ withing the block. We call this event $\mc{E}= 1,$ or leads to a partially decoded sequence $\bz = z^B,$ and we call this outcome $\mc{E} = 0.$ In this case, we write $\by = (z^B, \bm{s})$.
\begin{align}
      \KL(\pi_B^{(n)}(\cdot | \bx)  \| \piref(\cdot | \bx)) = &E_{\by \sim \pi_B^{(n)}(\cdot | \bx)} \log \frac{\pi_B^{(n)}(\by | \bx)}{\piref(\by|\bx)} \\
       = &E_{\by \sim \pi_B^{(n)}(\cdot | \bx)} \left\{\mathbf{1}(\mc{E} = 1) \log \frac{\pi_B^{(n)}(\by | \bx)}{\piref(\by|\bx)} \right\} + E_{\by \sim \pi_B^{(n)}(\cdot | \bx)} \left\{\mathbf{1}(\mc{E} = 0) \log \frac{\pi_B^{(n)}(\by | \bx)}{\piref(\by|\bx)} \right\}\\
       = & E_{\by \sim \pi_B^{(n)}(\cdot | \bx)} \left\{\mathbf{1}(\mc{E} = 1) \log \frac{\pi_B^{(n)}(\by | \bx)}{\piref(\by|\bx)} \right\}\nonumber\\
      & + E_{z^B \sim \pi_B^{(n)}(\cdot | \bx)}E_{\bs \sim \pi_B^{(n)}(\cdot | \bx, z^B)} \left\{\mathbf{1}(\mc{E} = 0) \left(\log \frac{\pi_B^{(n)}(z^B | \bx)}{\piref(z^B|\bx)} + \log \frac{\pi_B^{(n)}(\bs | \bx, z^B)}{\piref(\bs|\bx, z^B)}\right) \right\}\\
      = & E_{\by \sim \pi_B^{(n)}(\cdot | \bx)} \left\{\mathbf{1}(\mc{E} = 1) \log \frac{\pi_B^{(n)}(\by | \bx)}{\piref(\by|\bx)} \right\} + E_{z^B \sim \pi_B^{(n)}(\cdot | \bx)} \left\{\mathbf{1}(\mc{E} = 0)\log \frac{\pi_B^{(n)}(z^B | \bx)}{\piref(z^B|\bx)} \right\}\nonumber\\
      & + E_{z^B \sim \pi_B^{(n)}(\cdot | \bx)}E_{\bs \sim \pi_B^{(n)}(\cdot | \bx, z^B)} \left\{\mathbf{1}(\mc{E} = 0)  \log \frac{\pi_B^{(n)}(\bs | \bx, z^B)}{\piref(\bs|\bx, z^B)} \right\} \\
      \leq&  \KLn + E_{z^B \sim \pi_B^{(n)}(\cdot | \bx)} \left\{ \mathbf{1}(\mc{E} = 0) \KL(\pi_B^{(n)}(\cdot | \bx, z^B)\| \piref(\cdot | \bx, z^B))\right\}\\
      \leq &  \KLn + 
       E_{z^B \sim \pi_B^{(n)}(\cdot | \bx)} \mathbf{1}(\mc{E} = 0) E_{\bs \sim \pi_B^{(n)}(\cdot | \bx, z^B)} \Big\lceil \frac{|\bs|}{B}  \Big\rceil  \KLn \label{eq:block-main-step}\\
       = &  
       E_{\by \sim \pi_B^{(n)}(\cdot | \bx)} \Big\lceil \frac{|\by|}{B}  \Big\rceil  \KLn ,
\end{align}
where \cref{eq:block-main-step} follows from recursively applying the same procedure to the subsequent blocks.
\end{proof}
This theorem immediately suggests that $\lceil \frac{|\by|}{B}  \rceil  \KLn$ could be used as  an estimator for the KL divergence of block-wise best-of-$n,$ and in expectation the estimator provides an upper bound on the KL divergence of blockwise best-of-$n$ and the reference model.
Sequence-level best-of-$n$ could be viewed as blockwise best-of-$n$ with $B \to \infty,$ and Theorem~\ref{thm:blockwise-best-of-n} simplifies to Theorem~\ref{thm:KL-upper-bound} in this asymptotic regime.
\newpage
\section{Experiments}
\label{app:exp}

\subsection{Details of experiments on Alpaca dataset}
We select the following four prompts from the Alpaca dataset \citep{taori2023stanford}.
\begin{enumerate}
\item[P1] Transform the following sentence into a yes/no question. It is going to rain tomorrow.
\item[P2] Describe the function of a computer motherboard.
\item[P3] What is the capital of France?
\item[P4] Give three tips for staying healthy.
\end{enumerate}
We plot results based on Gemma 2 9B parameter instruction tuned model \citep{team2024gemma} with temperature one. We further modify the prompts as per the instructions in \url{https://ai.google.dev/gemma/docs/formatting}. We use log-likelihood of the reference model as the reward.  To get the better upper bound we use Corollary~\ref{thm:bound-with-highest-p}, where we bound ${\varepsilon}_\infty$ using $100$ samples.

\subsection{Computation of tilted min/max}
\label{appendix:tilt}
We compute the tilted average of the estimator defined by~\citep[Equation (2)]{li2023tilted}:
\begin{equation}
  \frac{1}{t}\log E_{\by \sim \pi^{(n)}(\cdot|\bx)} \left[e^{t d_n(\varepsilon_n)}\right].
\label{eq:tilted-d}
\end{equation}
Note that \cref{eq:tilted-d} recovers $\min_{\by \sim \pi^{(n)}(\cdot|\bx)} \left[ d_n(\varepsilon_n)\right]$ for $t \to -\infty$, and $\max_{\by \sim \pi^{(n)}(\cdot|\bx)} \left[ d_n(\varepsilon_n)\right]$ for $t \to \infty$~\citep{li2023tilted}.
To capture the variance, we call the value for $t=-1$ the {\em tilted min}, and the value for $t=1$ the {\em tilted max}. The tilted min/max capture the low/high quantiles of the value of the estimator and their difference portrays the deviation from the mean.

\subsection{Experiments with machine translation prompts}
\label{app:translation}

In this section, we demonstrate the value of the new estimator using machine translation prompts.

\begin{enumerate}
\item[P1] Translate the next sentences to German. I want to buy bread.
\item[P2] Translate the next sentences to French. I want to buy bread.
\item[P3] Translate the next sentences to German. A simple and effective method for the inference-time alignment and scaling test-time compute of generative models is the best-of-$n$ policy, where $n$ samples are drawn from a reference policy, ranked based on a reward function, and the highest ranking one is selected. A commonly used analytical expression in the literature  claims that the KL divergence between the best-of-$n$ policy and  the reference policy is equal to $\log (n) - (n-1)/n.$ We disprove the validity of this claim, and show that it is an upper bound on the actual KL divergence. We also explore the tightness of this upper bound in different regimes, and propose a new estimator for the KL divergence and empirically show that it provides a tight approximation. We also show that the win rate of the best-of-$n$ policy against the reference policy is upper bounded by $n/(n+1)$ and derive bounds on the tightness of this characterization. We conclude with analyzing the tradeoffs between win rate and KL divergence of the best-of-$n$ alignment policy, which demonstrate that very good tradeoffs are achievable with $n < 1000$.
\item[P4] Translate the next sentences to French. A simple and effective method for the inference-time alignment and scaling test-time compute of generative models is the best-of-$n$ policy, where $n$ samples are drawn from a reference policy, ranked based on a reward function, and the highest ranking one is selected. A commonly used analytical expression in the literature  claims that the KL divergence between the best-of-$n$ policy and  the reference policy is equal to $\log (n) - (n-1)/n.$ We disprove the validity of this claim, and show that it is an upper bound on the actual KL divergence. We also explore the tightness of this upper bound in different regimes, and propose a new estimator for the KL divergence and empirically show that it provides a tight approximation. We also show that the win rate of the best-of-$n$ policy against the reference policy is upper bounded by $n/(n+1)$ and derive bounds on the tightness of this characterization. We conclude with analyzing the tradeoffs between win rate and KL divergence of the best-of-$n$ alignment policy, which demonstrate that very good tradeoffs are achievable with $n < 1000$.
\end{enumerate}

\begin{figure}[t]
    \centering
    \hspace{-.1in}
    \includegraphics[width=0.45\linewidth]{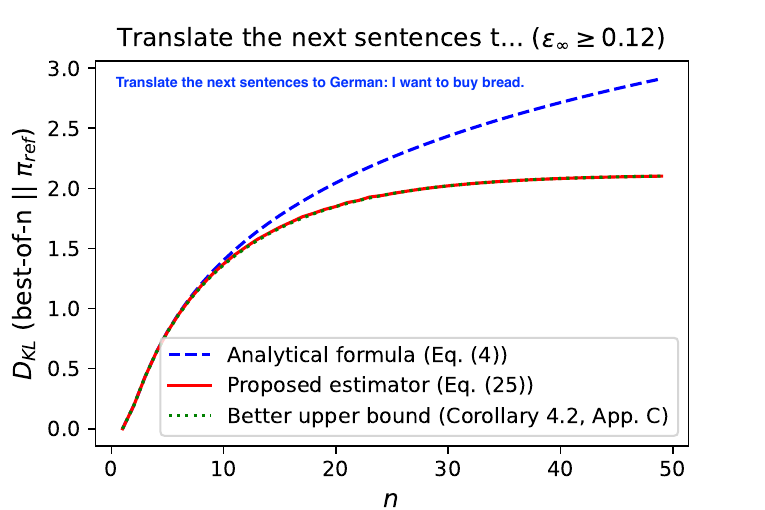}
    \hspace{0.05\linewidth}
  \includegraphics[width=0.45\linewidth]{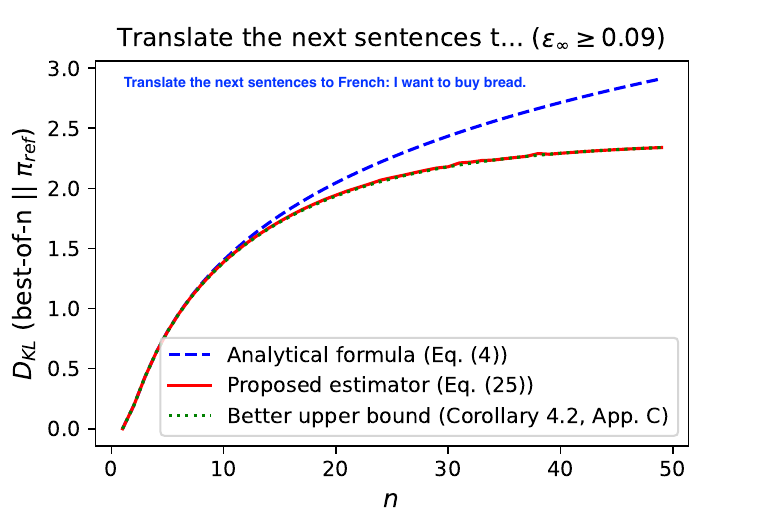}\\
   \includegraphics[width=0.45\linewidth]{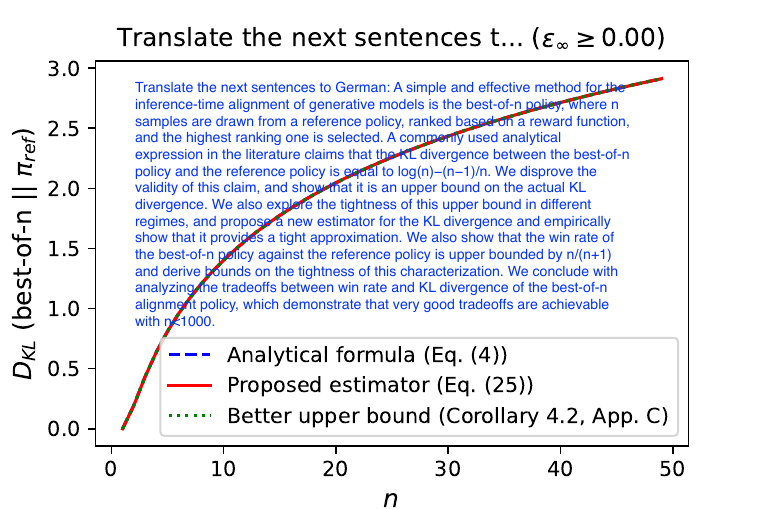}
  \hspace{0.05\linewidth}
    \includegraphics[width=0.45\linewidth]{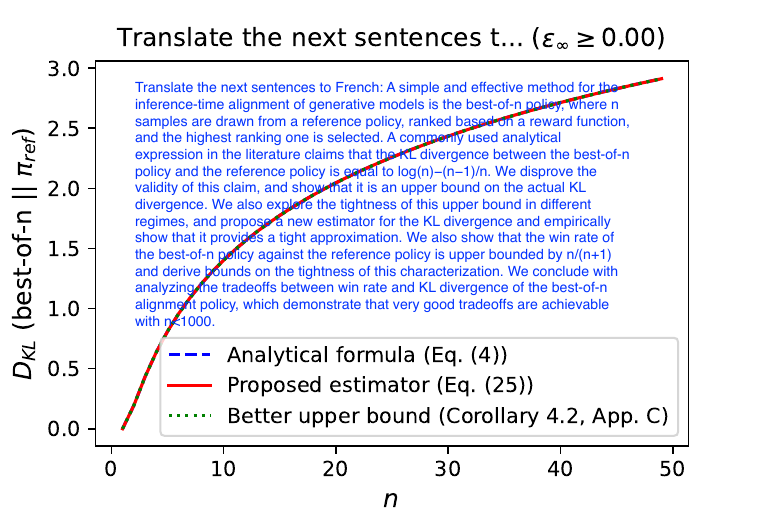}
    \vspace{-.15in}
    \caption{The analytical  formula ($\log(n) - (n-1)/n$), \cref{eq:KL-formula}, better upper bound (Corollary~\ref{thm:bound-with-highest-p}, Appendix~\ref{app:exp}), the proposed estimator, \cref{eq:proposed-KL-estimator}, and the exact KL divergence, for four machine translation examples using Gemma 9B IT model \citep{team2024gemma} with reward the log-likelihood of response under the reference model. 
    }
    \label{fig:translate1}
    \vspace{-.1in}
\end{figure}

\begin{figure}[t]
    \centering
    \hspace{-.1in}
    \includegraphics[width=0.45\linewidth]{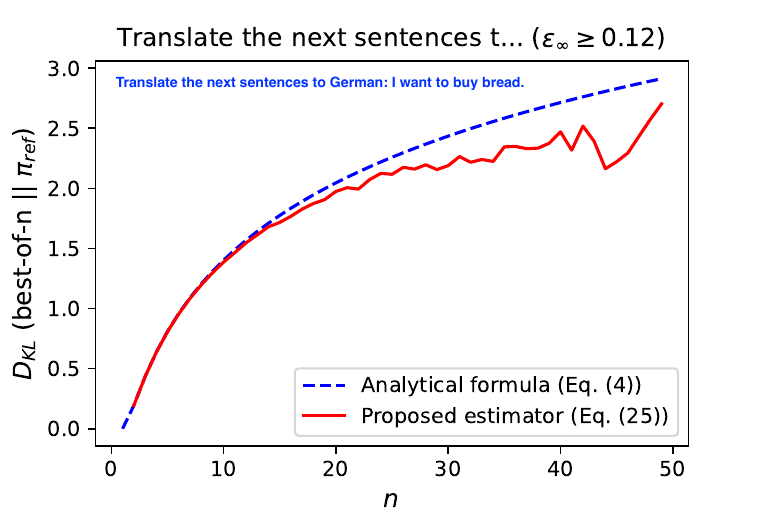}
    \hspace{0.05\linewidth}
  \includegraphics[width=0.45\linewidth]{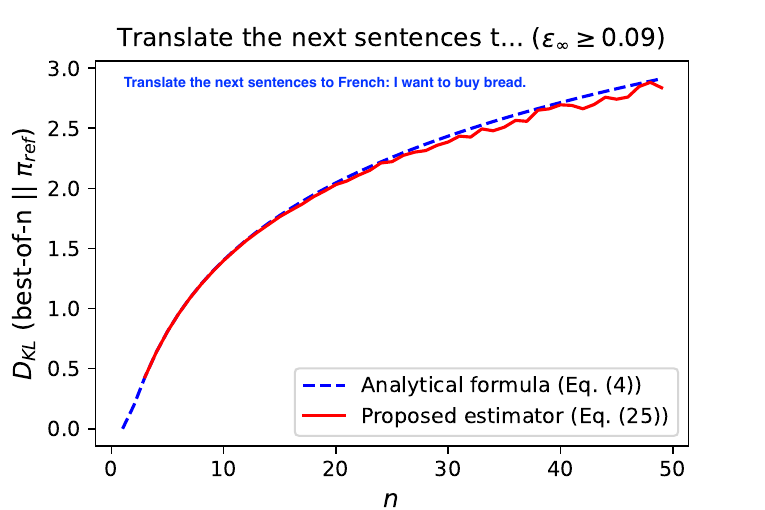}\\
   \includegraphics[width=0.45\linewidth]{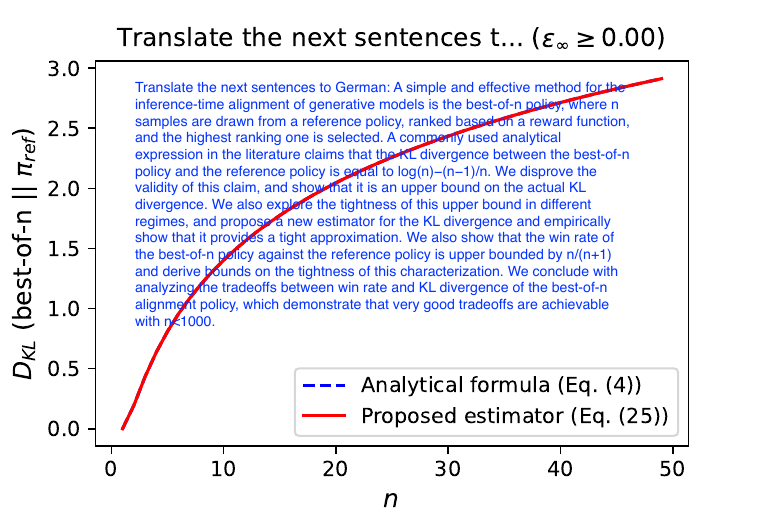}
  \hspace{0.05\linewidth}
    \includegraphics[width=0.45\linewidth]{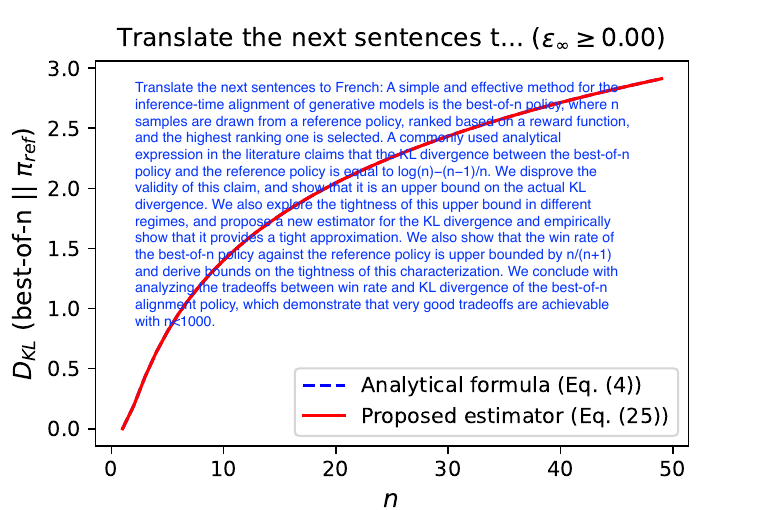}
    \vspace{-.15in}
    \caption{The analytical  formula ($\log(n) - (n-1)/n$), \cref{eq:KL-formula}, better upper bound (Corollary~\ref{thm:bound-with-highest-p}, Appendix~\ref{app:exp}), the proposed estimator, \cref{eq:proposed-KL-estimator}, and the exact KL divergence, for four machine translation examples using Gemma 9B IT model \citep{team2024gemma} with reward the negative length of response under the reference model. 
    }
    \label{fig:translate2}
    \vspace{-.1in}
\end{figure}

\end{document}